\newtheorem{theorem}{Theorem}[section]
\newtheorem{proposition}[theorem]{Proposition}
\newtheorem{lemma}[theorem]{Lemma}
\newtheorem{corollary}[theorem]{Corollary}
\theoremstyle{definition}
\newtheorem{assumption}[theorem]{Assumption}
\theoremstyle{remark}
\newcommand{\goinf}{\rightarrow \infty}
\newcommand{\pdist}{d}
\newcommand{\llh}{L}
\newcommand{\Q}{\bar{Q}}
\newcommand{\stochasticQ}{\bar{\mathcal{Q}}}
\newcommand{\calT}{\mathcal{T}}
\newcommand{\cA}{\mathcal{A}}
\newcommand{\calA}{\mathcal{A}}
\newcommand{\calS}{\mathcal{S}}
\newcommand{\spnorm}[1]{\left\| #1 \right\|_{sp,\infty}}
\newcommand{\norm}[1]{\left\| #1\right\|}
\newcommand{\infnorm}[1]{\|#1\|_{\infty}}
\newcommand{\Qiter}[1]{\Q^{\pi_{#1}}(s,a;\theta_{#1})}
\newcommand{\Qirl}[1]{\Q^{\pi_{#1}}_{\theta_{#1}}}
\newcommand{\Qbiter}[1]{\Q^{\pi_{\theta_{#1}}}(s,a;\theta_{#1})}
\newcommand{\Qbellman}[1]{\Q^{\pi_{\theta_{#1}}}_{\theta_{#1}}}
\title{Inverse Reinforcement Learning with the Average Reward Criterion}
\author{%
  Feiyang ~Wu \\
  School of Computational Science and Engineering\\
  College of Computing\\
  Georgia Institute of Technology\\
  Atlanta, Georgia 30332 \\
  \texttt{feiyangwu@gatech.edu} \\
  \AND
  Jingyang ~Ke \\
  School of Computer Science\\
  College of Computing\\
  Georgia Institute of Technology\\
  Atlanta, Georgia 30332 \\
  \texttt{jingyang.ke@gatech.edu} \\
  \AND
  Anqi ~Wu \\
  School of Computational Science and Engineering\\
  College of Computing\\
  Georgia Institute of Technology\\
  Atlanta, Georgia 30332 \\
  \texttt{anqiwu@gatech.edu} \\
}
\begin{document}

\maketitle

\begin{abstract}
We study the problem of Inverse Reinforcement Learning (IRL) with an average-reward criterion. The goal is to recover an unknown policy and a reward function when the agent only has samples of states and actions from an experienced agent. Previous IRL methods assume that the expert is trained in a discounted environment, and the discount factor is known.
This work alleviates this assumption by proposing an average-reward framework with efficient learning algorithms. 
We develop novel stochastic first-order methods to solve the IRL problem under the average-reward setting, which requires solving an Average-reward Markov Decision Process (AMDP) as a subproblem. 
To solve the subproblem, we develop a Stochastic Policy Mirror Descent (SPMD) method under general state and action spaces that needs $\mathcal{{O}}(1/\varepsilon)$ steps of gradient computation. Equipped with SPMD, we propose the Inverse Policy Mirror Descent (IPMD) method for solving the IRL problem with a $\mathcal{O}(1/\varepsilon^2)$ complexity. To the best of our knowledge, the aforementioned complexity results are new in IRL. Finally, we corroborate our analysis with numerical experiments using the MuJoCo benchmark and additional control tasks.
\end{abstract}

\section{Introduction}
\label{submission}
Reinforcement Learning (RL) problems are frequently formulated as Markov Decision Processes (MDPs). The agent learns a policy to maximize the reward gained over time. However, in numerous engineering challenges, we are typically presented with a set of state-action samples from experienced agents, or \textit{experts}, without an explicit reward signal. Inverse Reinforcement Learning (IRL) aspires to recover the expert's policy and reward function from these collected samples.

Methods like Imitation Learning \cite{schaal1999imitation, ho2016generative, duan2017one} reduce the IRL problem to predicting an expert's behavior without estimating the reward signal. Such formulation is undesirable in some scenarios when we either wish to continue training or analyze the structure of the agent's reward signal, e.g., in reward discovery for animal behavior study \cite{sezenerobtaining,yamaguchi2018identification,pinsler2018inverse,hirakawa2018can}. Furthermore, as a popular methodology in Imitation Learning, Generative Adversarial Network (GAN) \cite{ho2016generative, zolna2021task} suffers from unstable training due to mode collapse \cite{wang2017robust}.
The sampling-based Bayesian IRL (BIRL) approach \cite{choi2012nonparametric,choi2014hierarchical,chan2021scalable} treats the reward function as a posterior distribution from the observation. This approach is conceptually straightforward but suffers from slow convergence of sampling.

Furthermore, inferring rewards poses an additional challenge: without any additional assumption, the same expert behavior can be explained by multiple reward signals \cite{ziebart2008maximum}. To resolve this ambiguity, the authors propose adding the principle of \textit{Maximum Entropy} that favors a solution with a higher likelihood of the trajectory. In the following work \cite{ziebart2010modeling}, the objective is replaced by maximizing the entropy of the policy, which better characterizes the nature of stochastic sequential decision-making. A body of work focuses on alleviating the computational burden of the nested optimization structure of under the Maximum Entropy framework. In \cite{garg2021iq}, the authors skipped the reward function update stage by representing rewards using action-value functions. $f$-IRL \cite{ni2021f} minimize the divergence of state visitation distribution for finite-horizon MDPs. More recently, in \cite{zeng2022maximum}, the authors propose a dual formulation of the original Maximum Entropy IRL and a corresponding algorithm to solve IRL in the discounted setting. 

Despite the success in practice, theoretical understanding of RL/IRL methods is lacking. For RL, specifically for Discounted Markov Decision Processes (DMDPs), analysis of policy gradient methods for solving DMDPs is catching up only until recently \cite{bhandari2020note, khodadadian2021finite, cen2021fast, lan2023policy, lan2022policy}. 
Meanwhile, understanding of the Average-reward Markov Decision Processes (AMDPs) remains very limited, where the policy aims to maximize the long-run average reward gained. Li et al.\cite{li2022stochastic} propose stochastic first-order methods for solving AMDPs with a rate of convergence $\mathcal{\tilde{O}}(\log(\varepsilon^{-1}))$ for policy optimization, but it is constrained to finite states and actions with policy explicitly represented in tabular forms. Moreover, for IRL, the only known finite time analysis are \cite{zeng2022maximum, zeng2023understanding, metelli2023towards} for either solving DMDPs, or finite-horizon MDPs. There is no well-established convergence analyses for IRL with AMDPs. In this paper, we focus on convergence analyses for average-award MDPs. We extend the RL algorithm \cite{li2022stochastic} for AMDPs to general state and action spaces with general function approximation and develop an algorithm to solve the IRL problem with the average-reward criterion. Analyses of both algorithms seem to be new in the literature to the best of our knowledge.

\subsection{Main contributions}
\textbf{Stochastic Policy Mirror Descent for solving AMDPs}: 
Our research introduces the Stochastic Policy Mirror Descent (SPMD) algorithm, a novel approach designed to tackle Average-reward Markov Decision Processes (AMDPs) that involve general state and action spaces. Leveraging a performance difference lemma, we demonstrate that the SPMD algorithm converges within $\mathcal{{O}}(\varepsilon^{-1})$ steps for a specific nonlinear function approximation class, and in no more than $\mathcal{{O}}(\varepsilon^{-2})$ steps for general function approximation. 

\textbf{Inverse Policy Mirror Descent for solving Maximum Entropy IRL}: We expound a dual form of the Maximum Entropy IRL problem under the average-reward framework with the principle of Maximum Entropy. This dual problem explicitly targets the minimization of discrepancies between the expert's and agent's expected average reward. Consequently, we propose a first-order method named Inverse Policy Mirror Descent (IPMD) for effectively addressing the dual problem. This algorithm operates by partially resolving an Entropy Regularized RL problem at each iteration, which is systematically solved by employing SPMD. Drawing upon the \textit{two-timescale} stochastic approximation analysis framework \cite{borkar1997actor}, we present the convergence result and establish a $\mathcal{{O}}(\varepsilon^{-2})$ rate of convergence.

\textbf{Numerical experiments}: Our RL and IRL methodologies have been tested against the well-known robotics manipulation benchmark, MuJoCo, as a means to substantiate our theoretical analysis. The results indicate that the proposed SPMD and IPMD algorithms generally outperform state-of-the-art algorithms. In addition, we found that the IPMD algorithm notably reduces the error in recovering the reward function.

\section{Background and problem setting}
\subsection{Average reward Markov decision processes}
An Average-reward Markov decision process is described by a tuple $\mathcal{M}:=(\calS,\calA,\mathsf{P},c)$, where $\mathcal{S}$ denotes the state space, $\calA$ denotes the action space, $\mathsf{P}$  is the transition kernel, and $c$ is the cost function. At each time step, the agent takes action $a\in\calA$ at the current state $s\in\calS$ according to its policy $\pi: \calS\rightarrow \calA$. We use $\Pi$ as the space of all feasible policies. Then the agent moves to the next state $s'\in \mathcal{S}$ with probability $\mathsf{P}(s' |s, a)$, while the agent receives an instantaneous cost $c(s, a)$ (or a reward $r(s, a)=-c(s, a)$). The agent's goal is to determine a policy that minimizes the long-term cost
\begin{equation}
    \label{def_avg_rwd}
	\rho^\pi(s) \coloneqq \lim _{T \rightarrow \infty} \tfrac{1}{T}\mathbb{E}_\pi\left[\tsum_{t=0}^{T-1} (c(s_t,a_t)+h^\pi(s_t))\big|s_0=s\right],
\end{equation}
where $h^\pi$ is a closed convex function with respect to the policy $\pi$, i.e., there exists $\mu_h\geq 0$ such that 
\begin{align}\label{convex_regularizer}
	h^\pi(s) - [h^{\pi'}(s) + \langle \nabla h^{\pi'} (s,\cdot), \pi(\cdot|s)-\pi'(\cdot|s)\rangle] \geq \mu_h D(\pi(s), \pi'(s)),
\end{align}
where $\nabla h^{\pi'}$ denotes the subgradient of $h$ at $\pi'$ and $D(\cdot,\cdot)$ is the Bregman distance, i.e.,
\begin{equation}
    D(a_2, a_1) \coloneqq \omega(a_1) - [\omega(a_2)+\langle \omega(a_2)', a_1-a_2\rangle] 
     \geq \tfrac{1}{2} \|a_1-a_2\|^2, \text{for all } a_1, a_2\in \calA.
\end{equation} 
Here $\omega:\calA\rightarrow \mathbb{R}$ is a strongly convex function with the associated norm $\|\cdot\|$ in the action space $\calA$ and let us denote $\|\cdot\|_*$ as its dual norm. In this work we also utilize a span semi-norm \cite{puterman2014markov}: $\|v\|_{sp, \infty} \coloneqq \max v - \min v, \forall v\in \mathbb{R}^n.$
If $h^\pi=0$, Eq. \ref{def_avg_rwd} reduces to the classical unregularized AMDP. If $
h^\pi(s) = \mathbb{E}_{\pi} [-\log \pi] =: \mathcal{H}(\pi(s))$, i.e., the (differential) entropy, Eq. \ref{def_avg_rwd} defines the average reward of the so-called entropy-regularized MDPs. 

In this work, we consider the \emph{ergodic} setting, for which we make the following assumption formally:
\begin{assumption}  \label{assump:ergodicity}
    For any feasible policy $\pi$, the Markov chain induced by policy $\pi$ is ergodic. The Markov chain is Harris ergodic in general state and action spaces (see \cite{meyn2012markov}). The stationary distribution $\kappa^{\pi}  \label{eq:state-visitation} $ induced by any feasible policy exists and is unique. There is some constant number $0<\Gamma<1$ such that $\kappa^{\pi}(s) \geq 1-\Gamma$.
\end{assumption}
As a result of Assumption \ref{assump:ergodicity}, for any feasible policy $\pi$, the average-reward function does not depend on the initial state (see Section 8 of \cite{puterman2014markov}). Given that, one can view $\rho^\pi$ as a function of $\pi$. For a given policy we also define the \emph{basic differential value function} (also called \emph{bias function}; see, e.g., \cite{puterman2014markov})
\begin{align}
	\bar {V}^{\pi}(s)\coloneqq \bbe\left[\tsum_{t=0}^{\infty}c\left(s_{t}, a_{t}\right)+h^\pi(s_t)-\avgrwd | s_{0}=s, a_t\sim \pi(\cdot|s_t), s_{t+1}\sim \mathsf{P}(\cdot|s_t,a_t) \right],
\end{align}
and the \emph{basic differential action-value function (or basic differential Q-function)}  is defined as
\begin{equation}
	\bar {Q}^{\pi}(s, a):=\bbe\left[\tsum_{t=0}^{\infty}c\left(s_{t}, a_{t}\right)+h^\pi(s_t)-\avgrwd | s_{0}=s, a_0=a, a_t\sim \pi(\cdot|s_t), s_{t+1}\sim \mathsf{P}(\cdot|s_t,a_t) \right].
\end{equation}
Moreover, we define the sets of \emph{differential value functions} and \emph{differential action-value functions (differential $Q$-functions)} as the solution sets of the following Bellman equations, respectively,
\begin{align}
    \bar{V}^{\pi}(s) &= \mathbb{E}[\bar{Q}^{\pi}(s,a)| a \sim \pi(\cdot|s)], \label{bellman_0_0}\\
    \bar{Q}^\pi (s,a) &= c(s,a) + h^{\pi(s)}(s) - \rho^\pi(s) + \int \mathsf{P}(ds'\mid s,a) \bar{V}^{\pi}(s').\label{bellman_0}
\end{align}
Under Assumption~\ref{assump:ergodicity}, the solution of Eq.~\eqref{bellman_0_0} (resp., Eq. \eqref{bellman_0}) is unique up to an additive constant. Finally, our goal in solving an AMDP is to find an optimal policy $\pi^*$ that minimizes the average cost:
\begin{equation}\label{def_problem} \tag{AMDP}
	\begin{aligned}
		\rho^* = \rho^{\pi^*} = \min_{\pi} \rho^\pi\quad \text{s.t.} \quad \pi(\cdot|s) \in \Pi,~\forall s\in \mathcal{S}.
	\end{aligned}
\end{equation}
\subsection{Inverse Reinforcement Learning}
Suppose there is a near-optimal policy, and we are given its demonstrations $\zeta \coloneqq \{(s_i,a_i)\}_{i\geq 1}$. IRL aims to recover a reward function such that the estimated reward best explains the demonstrations. 
We consider solving the IRL problem under the Maximum Entropy framework (MaxEnt-IRL), 
which aims to find a reward representation that maximizes the entropy of the corresponding policy and incorporates feature matching as a constraint. 
Formally, the MaxEnt-IRL problem is described as
\begin{align}\label{maxent}
    \max_{\pi}  \ & \mathcal{H}(\pi) \coloneqq\bbe_{(s,a) \sim \pdist^{\pi}(\cdot,\cdot)}[-\log \pi(a|s)]   \tag{MaxEnt-IRL} \\
	{\rm s.t.} \ &\mathbb{E}_{(s,a) \sim \pdist^{\pi}} \left[ \varphi(s, a) \right] = \mathbb{E}_{(s,a) \sim \pdist^{E}} \left[ \varphi(s, a) \right] \nonumber
\end{align}
where $d^E$ and $d^\pi$ denote the state-action distribution induced by the expert and current policy $\pi^E$ and $\pi$ respectively, and $\varphi(s,a)\in \mathbb{R}^n$ denotes the feature of a given $(s,a)$ pair. 
If we assume that for a given state-action pair, the cost is a linear function to its feature, i.e., $c(s,a;\theta) = \theta^T \varphi(s,a)$ for some parameter $\theta$ with the same dimension of the feature $\varphi(s,a)$, 
we can show that the parameter $\theta$ is the dual multiplier of the above optimization problem, as the Lagrangian function can be written as
\begin{align}
\mathcal{L}(\pi, \theta) = 
 \mathcal{H}(\pi) + \mathbb{E}_{(s,a) \sim \pdist^{E}} \left[ c(s, a;\theta) \right] - \mathbb{E}_{(s,a) \sim \pdist^{\pi}} \left[ c(s, a;\theta) \right].
\end{align}
Therefore, the dual problem is formulated as
\begin{align}
    &\min_{\theta} \quad L(\theta) \coloneqq  \mathbb{E}_{(s,a) \sim \pdist^{E}} \left[ c(s, a;\theta) \right] - \mathbb{E}_{(s,a) \sim \pdist^{\pi}} \left[ c(s, a;\theta) \right] \label{D} \tag{Dual IRL}\\
	&{\rm s.t.} \quad \pi = \arg\max_{\pi'} \mathbb{E}_{(s,a) \sim \pdist^{\pi'}} \left[ -c(s, a;\theta) \right] + \mathcal{H}(\pi') \nonumber.
\end{align}
Notice that to find $\pi$, we need to solve an Entropy Regularized Reinforcement Learning problem, i.e., solving \ref{def_problem} with the regularizer set to be the negative entropy, $h^{\pi} = -\mathcal{H}(\pi)$.
To this end, we first propose a Stochastic Policy Mirror Descent (SPMD) method for solving \ref{def_problem}. The solution methods are presented in section \ref{sec:policy_optimization}. Then we introduce an Inverse Policy Mirror Descent (IPMD) algorithm based on SPMD for the inverse RL problem \eqref{D}, introduced in section \ref{sec:ipmd}. We will see that we only need to solve the subproblem \ref{def_problem} partially.

\section{Stochastic Policy Mirror Descent for AMDPs}\label{sec:policy_optimization}
This section proposes an SPMD method to solve Regularized Reinforcement Learning problems for AMDPs. 
SPMD operates in an actor-critic fashion. In each step, the agent evaluates its current policy (critic step) and performs a policy optimization step (actor step). In this work, we assume there is a way to perform the critic step using standard methods, e.g., Temporal Difference learning using neural networks. Further discussion on implementation is included in section \ref{subsection:computational}. 
We will focus on designing a novel actor step and providing its complexity analysis.

Our policy optimization algorithm 
is motivated by the following performance difference lemma
, which characterizes the difference in objective values of two policies $\pi, \pi'$. 
\begin{lemma}(Performance Difference)\label{lemma:perm-diff}
	Assume that Assumption \ref{assump:ergodicity} holds. For any $\pi, \pi'\in \Pi$
 \begin{align}
     \label{eq:perm-diff}
 \rho^{\pi'}-\rho^\pi = \int \psi^\pi (s,\pi'(s)) \kappa^{\pi'}(ds), \forall s\in \calS,
 \end{align}
 where 
 \begin{align}
  \label{def:advantage}
 \psi^\pi (s,\pi'(s)) \coloneqq \bar{Q}^{\pi}(s,\pi'(s)) - \bar{V}^{\pi}(s) + h^{\pi'}(s) - h^{\pi}(s).
 \end{align}
\end{lemma}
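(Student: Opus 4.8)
The plan is to unfold the definition of the regularized advantage $\psi^\pi$, substitute the Bellman equation \eqref{bellman_0} for $\bar Q^\pi$, and integrate against the stationary distribution $\kappa^{\pi'}$ of the comparison policy, at which point two cancellations --- one purely algebraic and one probabilistic --- collapse the expression to $\rho^{\pi'}-\rho^\pi$. Concretely, I would first write $\bar Q^\pi(s,\pi'(s)) = \mathbb{E}_{a\sim\pi'(\cdot|s)}[\bar Q^\pi(s,a)]$ and insert \eqref{bellman_0}, obtaining
\[
\bar Q^\pi(s,\pi'(s)) = \mathbb{E}_{a\sim\pi'}[c(s,a)] + h^\pi(s) - \rho^\pi + \mathbb{E}_{a\sim\pi'}\!\int \mathsf{P}(ds'|s,a)\,\bar V^\pi(s').
\]
Substituting this into the definition of $\psi^\pi(s,\pi'(s))$, the two copies of $h^\pi(s)$ cancel while the $h^{\pi'}(s)$ term persists, leaving
\[
\psi^\pi(s,\pi'(s)) = \mathbb{E}_{a\sim\pi'}[c(s,a)] + h^{\pi'}(s) - \rho^\pi + \mathbb{E}_{a\sim\pi'}\!\int \mathsf{P}(ds'|s,a)\,\bar V^\pi(s') - \bar V^\pi(s).
\]

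Next I would integrate this identity over $s$ against $\kappa^{\pi'}(ds)$. The first group of terms, $\int \kappa^{\pi'}(ds)\,[\mathbb{E}_{a\sim\pi'}[c(s,a)] + h^{\pi'}(s)]$, is exactly the stationary expectation of the per-step regularized cost under $\pi'$, which under Assumption~\ref{assump:ergodicity} equals the average cost $\rho^{\pi'}$; the constant contributes $-\rho^\pi$. For the last two terms I would invoke the invariance of $\kappa^{\pi'}$ under the transition kernel induced by $\pi'$: for any integrable $f$,
\[
\int \kappa^{\pi'}(ds)\,\mathbb{E}_{a\sim\pi'(\cdot|s)}\!\int \mathsf{P}(ds'|s,a)\,f(s') = \int \kappa^{\pi'}(ds')\,f(s').
\]
Applying this with $f = \bar V^\pi$ makes the two $\bar V^\pi$ terms cancel exactly, and what remains assembles to $\rho^{\pi'}-\rho^\pi$, which is precisely \eqref{eq:perm-diff}.

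The crux is the invariance identity, and this is where Assumption~\ref{assump:ergodicity} does the real work: ergodicity guarantees that $\kappa^{\pi'}$ exists, is unique, and is genuinely stationary for $\pi'$, which is exactly what forces the value-function terms to cancel. A secondary subtlety is that $\bar V^\pi$ is only determined up to an additive constant by \eqref{bellman_0_0}--\eqref{bellman_0}; this is harmless, since the same invariance argument shows any such constant enters both $\bar V^\pi$ terms with equal weight and drops out. The regularizer cancellation is routine algebra, and I do not expect it to pose any difficulty.
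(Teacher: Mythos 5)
Your proposal is correct, but it takes a genuinely different route from the paper. The paper proves the lemma ``forward'' from the left-hand side: it writes $\rho^{\pi'}-\rho^\pi$ as a Ces\`aro limit of time averages along a trajectory generated by $\pi'$, inserts the telescoping differences $\bar V^{\pi}(s_{t+1})-\bar V^{\pi}(s_t)$ together with $\pm h^{\pi(s_t)}(s_t)$ and $\pm\rho^\pi$, identifies the resulting per-step terms with $\bar Q^{\pi}(s_t,a_t)-\bar V^{\pi}(s_t)+h^{\pi'(s_t)}(s_t)-h^{\pi(s_t)}(s_t)$ via the Bellman equation \eqref{bellman_0}, and only at the last step invokes ergodicity to replace the time average by the integral against $\kappa^{\pi'}$. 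You instead work ``backward'' from the right-hand side: expand $\psi^\pi$ with the Bellman equation and let the invariance identity $\int \kappa^{\pi'}(ds)\,\mathbb{E}_{a\sim\pi'}\int \mathsf{P}(ds'|s,a)f(s') = \int \kappa^{\pi'}(ds')f(s')$ cancel the value-function terms pointwise in the stationary measure. The two arguments use the same ingredients (Bellman equation plus the ergodicity-based identity $\rho^{\pi'}=\mathbb{E}_{\kappa^{\pi'}}[c+h^{\pi'}]$, which the paper also states at the outset), but your static stationarity argument avoids trajectory limits and the implicit exchange of limit and expectation in the paper's telescoping step, and it makes explicit why the additive-constant ambiguity in $\bar V^\pi$ is harmless; the paper's trajectory formulation, on the other hand, stays closer to the definition \eqref{def_avg_rwd} of $\rho^\pi$ and so needs no separate verification that $\bar V^\pi$ is $\kappa^{\pi'}$-integrable, which your route tacitly assumes (it holds here because uniform ergodicity makes the bias function bounded).
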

Proof can be found in \ref{proof:lemma:perm-diff}. The above lemma 
shows the gradient of the objective function with respect to action $a$ is $\bar{Q}^{\pi}(s,\pi(s)) + h^\pi (s)$, i.e., 
\begin{align}\label{eq:gradient}
    \nabla_a \rho^\pi = \int \nabla_a (\bar{Q}^{\pi}(s,\pi(s)) + h^\pi (s))  \kappa^\pi(ds).
\end{align}
The existence of the above equation requires the differentiability of $\bar{Q}$, $h$ and locally Lipschitz continuity of $\kappa^\pi$ \citep[see][]{pflug1990line, pflug2006derivatives}.
Note that $\psi$ can be seen as some generalized advantage function. 
Lemma \ref{lemma:perm-diff} shows that the gradient of the objective relates to \textit{differential Q-functions}. This inspires us to update the policy in the following mirror-descent style:
\begin{equation}\label{eq:exact_update}
\pi_{k+1}(s) = \underset{a\in \calA }{\arg \min }\; \bar{Q}^{\pi_k}(s,a) + h^a (s) + \tfrac{1}{\eta_k} D(\pi_k(s),a),
\end{equation}
where $\eta_k$ is some predefined step size.
The complexity analysis of this type of algorithm has been thoroughly studied in \cite{li2022stochastic} in tabular forms or linear function approximation settings, 
i.e., a lookup table represents the policy, and the \textit{differential Q-functions} are approximated as linear functions with respect to some feature space. 
In practice, especially in the Deep Reinforcement Learning community, both policy and the $Q$-functions are represented by neural networks. Thus, novel analysis is required for general function approximation.
Additionally, the exact information of $\bar{Q}^{\pi}$ and $\nabla \omega(\pi)$ in Eq.~\ref{eq:Lip_Q} are usually unavailable. Note that $\nabla \omega(\pi)$ rises inside the Bregman distance $D(\pi_k(s), a)$, referring to Eq.~\ref{eq:critic_step}. We consider their stochastic estimators calculated from state-action sample pairs $\zeta$, denoted as
$\mathcal{\bar{Q}}^{\pi, \zeta}(s,a;\phi), \tilde{\nabla} \omega\left(\pi(s) ; \phi\right) $ respectively,
where $\phi$ denotes the parameters of the method of choice, for example, weights and biases in a neural network. For the rest of the paper, we abbreviate $\mathcal{\bar{Q}}^{\pi, \zeta}(s,a;\phi)$ as $\mathcal{\bar{Q}}(s,a;\phi)$ when the context is clear.
We described the Stochastic Policy Mirror Descent (SPMD) algorithm in Algorithm \ref{alg:SPMD_rl}.
\begin{algorithm}[ht!]\caption{The Stochastic Policy Mirror Descent (SPMD) algorithm for AMDPs}\label{alg:SPMD_rl}
	\begin{algorithmic}[1]
		\STATE{\textbf{Input}: Initialize random policy $\pi_0$ and step size sequence $\{\eta_k\}$}
		\FOR{$k=0,1,\cdots, K$}
		\STATE{Sample collection: $\zeta_k=\{(s_t,a_t,c_t)\}_{t\geq 1} $}
		\STATE{\textbf{Critic step}: Approximate $\bar Q^{\pi_k}$, $\nabla\omega(\pi_k(s))$ with
			\begin{align}\label{eq:critic_step}
			\bar Q^{\pi_k}(s,a) &\approx \bar{\mathcal{Q}}^{\pi_k,\zeta_k}(s,a;\phi_k), \\ 
            \nabla\omega(\pi_k(s)) &\approx \Tilde{\nabla}\omega(\pi_k(s);\phi_k) .
			\end{align}
		}
		\STATE{
			\textbf{Actor step}: 
Update the policy
\begin{equation}
    \pi_{k+1}(s) = \arg \min_{a\in \calA}  
    \bar{\mathcal{Q}}^{\pi_k,\zeta_k}(s,a;\phi_k)
    + h^a (s)
    +
    \tfrac{1}{\eta_k}
    (\langle \Tilde{\nabla}\omega(\pi_k(s);\phi_k), a\rangle
    + \omega(a)).
 \label{eq:PMD_subproblem}
\end{equation}
		}
		\ENDFOR
	\end{algorithmic}
\end{algorithm}

The following paragraphs present the complexity analysis for Algorithm \ref{alg:SPMD_rl}. Note that we assume the approximated $Q$ function might not be convex but weakly convex, i.e., 
\beq \label{eq:weakly_convex}
\stochasticQ(s, a; \phi) + \mu_{\stochasticQ} D(\pi_0(s), a) 
\eeq
is convex w.r.t. $a \in \calA$ for some $\mu_{\stochasticQ} \ge 0$ and $\pi_0$ is the initialized policy.
This assumption is general as any differential function with Lipschitz continuous gradients is weakly convex.
Moreover, we assume that $h^{\cdot}(s)$, $\stochasticQ(s, \cdot; \theta)$ and $\bar{Q}^\pi(s, \cdot)$ are Lipschitz continuous with respect to the action.
\begin{assumption} \label{assump:pmd-lip}
    For all $a_1, a_2 \in \calA$ there exist some constants $M_h, M_{\stochasticQ}, M_{\bar Q}$ such that 
    \begin{align}
    |h^{a_1}(s) - h^{a_2}(s)| &\leq M_{h} \|a_1 - a_2\|, \label{eq:Lip_h} \\
    |\stochasticQ(s, a_1; \phi) - \stochasticQ(s, a_2; \phi)| &\leq M_{\stochasticQ} \|a_1 - a_2\|, \label{eq:Lip_psi} \\
    |\bar Q^\pi(s, a_1) - \bar Q^\pi(s, a_2)| &\leq M_{\bar Q} \|a_1 - a_2\|, \label{eq:Lip_Q}
\end{align}
and $\stochasticQ(s, a;\phi)$ is $\mu_{{\stochasticQ}}$-weakly convex, i.e., Eq.~\ref{eq:weakly_convex} is convex.
\end{assumption}
Note that the strong convexity modulus of the objective function in Eq.~\ref{eq:PMD_subproblem} can be very large since $\eta_k$ can be small, in which case the subproblem Eq.~\ref{eq:PMD_subproblem} is strongly convex, thus the solution of Eq.~\ref{eq:PMD_subproblem} is unique due to strong convexity \cite{lan2022policy}.

Our complexity analysis follows the style of convex optimization. We begin the analysis by decomposing the approximation error in the following way
\begin{align} &\delta_k^Q(s, a)  :=\stochasticQ\left(s, a ; \phi_k\right)-\bar{Q}^{\pi_k}(s, a), \\ 
& \delta_k^\omega(s) 
:=\tfrac{1}{\eta_k} \tilde{\nabla} \omega\left(\pi_k(s) ; \phi_k\right)-\tfrac{1}{\eta_k} \nabla \omega\left(\pi_k(s)\right),\nn \\
&=\mathbb{E}\left[\tfrac{1}{\eta_k} \tilde{\nabla} \omega\left(\pi_k(s) ; \phi_k\right)\right]-\tfrac{1}{\eta_k} \nabla \omega\left(\pi_k(s)\right)+
\tfrac{1}{\eta_k} \tilde{\nabla} \omega\left(\pi_k(s) ; \phi_k\right)-\mathbb{E}\left[\tfrac{1}{\eta_k}\tilde{\nabla} \omega\left(\pi_k(s) ; \phi_k\right)\right]. \label{eq:error_omega}
\end{align}
We refer to the first two and last two approximation error terms in $\delta^\omega$ as $\delta_k^{\omega, det}$ and $\delta_k^{\omega, sto}$, respectively, as they represent a deterministic approximation error which we cannot reduce and a stochastic error related to the variance of the estimator. They represent the noise introduced in the critic step. To make the noise tractable and the entire algorithm convergent, we must assume they are not arbitrarily large. But before making more assumptions about these error terms, we must distinguish the case when $\stochasticQ+h$ is convex and when it's not, i.e., when $\mu\geq 0$ or $\mu<0$, where $\mu = \mu_h - \mu_{\stochasticQ}$ for simplicity.
We can obtain globally optimal solutions if $\mu \geq 0$. While in the other case, only stationary points can be obtained, e.g., both $\psi^\pi (s,\pi'(s))$ and $D(\pi_k, \pi_{k+1})$ approach $0$. Note that $\psi$ relates to the progress we make in each iteration. See \ref{proof:auxiliary_lemmas} for more detail. These two cases require different assumptions on the error terms, so they must be treated differently. We start with the optimistic case when $ \mu \geq 0$,
under which we make the following assumption about the approximation error $\delta_k^{\bar Q}(s, a)$ and
$\delta_k^\omega(s)$:
\begin{assumption}\label{assump: error_bound_mu_g0} When $\mu\geq 0$, the critic step has bounded errors, i.e., there exist some constants $\varsigma^{\bar Q},  \varsigma^\omega, \sigma^{\omega}$ such that
\begin{align}
&\bbe_{\zeta_k, s \sim \kappa^*}\left[|\delta_{k}(s, \pi_{k}(s))| + |\delta_{k}(s, \pi^*(s))|\right] \leq \varsigma^{\bar Q},\label{eq:weak_ass_pmd_convex1} \\
&\bbe_{\zeta_k, s \sim \kappa^*}[ \|\delta_k^{\omega,det}(s)\|_*] \leq \varsigma^\omega, \bbe_{\zeta_k, s \sim \kappa^*}[ \|\delta_k^{\omega,sto}\|_*^2]  \leq  (\sigma^{\omega})^2, \label{eq:weak_ass_pmd_convex3}
\end{align}
where $\zeta_k$ is the samples we collect in each iteration.
\end{assumption}
With the above assumption, we present the complexity result of SPMD when $\mu \geq 0$.
\begin{theorem}\label{pmd_convex}
Suppose \ref{assump:ergodicity}, \ref{assump:pmd-lip}, \ref{assump: error_bound_mu_g0} hold and the step size $\eta_k$ satisfies
\beq \label{eq:pmd_continuous_step}
\tfrac{\beta_k}{\eta_k} \le \beta_{k-1} (\mu+ \tfrac{1}{\eta_{k-1}}), k \ge 1,
\eeq
for some $\beta_k \ge 0$. Then after running the SPMD algorithm for $K$ iterations, we have
\begin{align}\label{eq:pmd_iteration_complexity}
(\tsum_{k=0}^{K-1} \beta_k)^{-1} \left( \tfrac{1}{1-\Gamma} \tsum_{k=0}^{K-1} \beta_k (\rho^{\pi_k}-\rho^*) + \beta_{K-1} (\mu\right. & \left. + \tfrac{1}{\eta_{K-1}}) \bbe [D(\pi_{K}, \pi^*)]\right) \nn \\
\leq (\tsum_{k=0}^{K-1} \beta_k)^{-1} \left( \tfrac{\beta_0}{\eta_0} D(\pi_0, \pi^*) + \tsum_{k=0}^{K-1} \beta_k\eta_k [(2M_{\mathcal{\bar Q}}\right. & \left.+ M_{\bar Q} + M_h)^2/2 + (\sigma^{\omega})^2] 
\right) \nn
\\
&\qquad  + \varsigma^{\bar Q} + \varsigma^\omega \bar D_\calA,
\end{align}
where $ \label{eq:def_bar_DA}
\bar D_\calA \coloneqq \max_{a_1, a_2 \in \calA} D(a_1, a_2).$
\end{theorem}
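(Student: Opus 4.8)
My plan is to turn the optimality condition of the per-state update \eqref{eq:PMD_subproblem} into a one-step progress inequality, integrate it against the optimal stationary distribution $\kappa^{*}$ so that the performance difference Lemma~\ref{lemma:perm-diff} converts the advantage into the gap $\rho^{\pi_k}-\rho^{*}$, and finally form the weighted sum $\sum_k\beta_k(\cdots)$ and telescope the Bregman terms using the step-size condition \eqref{eq:pmd_continuous_step}. The first step is to note that, since \eqref{eq:weakly_convex} makes $\stochasticQ(s,\cdot;\phi_k)$ $\mu_{\stochasticQ}$-weakly convex while $h^{\cdot}(s)$ is $\mu_h$-strongly convex (from \eqref{convex_regularizer}) and $\omega$ is $1$-strongly convex, the subproblem objective in \eqref{eq:PMD_subproblem} is $(\mu+\tfrac1{\eta_k})$-strongly convex with $\mu=\mu_h-\mu_{\stochasticQ}\ge 0$; hence its minimizer $\pi_{k+1}(s)$ is unique and admits a three-point inequality. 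After using $\bar Q^{\pi_k}(s,a)+h^{a}(s)=\psi^{\pi_k}(s,a)+\bar V^{\pi_k}(s)+h^{\pi_k}(s)$ (the last two terms are constant in $a$ and drop from the $\arg\min$), and substituting $\stochasticQ=\bar Q^{\pi_k}+\delta^Q_k$ and $\tilde\nabla\omega=\nabla\omega+\eta_k\delta^\omega_k$, I expect to arrive at the pointwise bound, for every $s$,
\[
\psi^{\pi_k}(s,\pi_{k+1}(s))+(\mu+\tfrac1{\eta_k})D(\pi_{k+1}(s),\pi^{*}(s))+\tfrac1{\eta_k}D(\pi_k(s),\pi_{k+1}(s))\le \psi^{\pi_k}(s,\pi^{*}(s))+\tfrac1{\eta_k}D(\pi_k(s),\pi^{*}(s))+E_k(s),
\]
where $E_k(s)$ collects the critic errors built from $\delta^Q_k$ and $\delta^\omega_k$.

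\textbf{Controlling the displacement and the errors.} Next I would lower-bound $\psi^{\pi_k}(s,\pi_{k+1}(s))$ using that $\psi^{\pi_k}(s,\pi_k(s))=0$ (a direct consequence of \eqref{bellman_0_0}) together with the Lipschitz bounds of Assumption~\ref{assump:pmd-lip}: combining the moduli of $\stochasticQ$, $\bar Q^{\pi}$ and $h$ gives $-\psi^{\pi_k}(s,\pi_{k+1}(s))\le (2M_{\stochasticQ}+M_{\bar Q}+M_h)\|\pi_{k+1}(s)-\pi_k(s)\|$, and a Young's inequality against $\tfrac1{\eta_k}D(\pi_k(s),\pi_{k+1}(s))\ge\tfrac1{2\eta_k}\|\pi_{k+1}(s)-\pi_k(s)\|^{2}$ absorbs the displacement term and produces the stepwise contribution $\tfrac{\eta_k}{2}(2M_{\stochasticQ}+M_{\bar Q}+M_h)^{2}$. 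The error term $E_k(s)$ is handled by the decomposition in \eqref{eq:error_omega}: the $\delta^Q_k$ contributions, evaluated at $\pi_k(s)$ and $\pi^{*}(s)$, and the deterministic gradient bias $\delta^{\omega,det}_k$ paired with the diameter give, under Assumption~\ref{assump: error_bound_mu_g0}, the non-vanishing terms $\varsigma^{\bar Q}+\varsigma^{\omega}\bar D_{\calA}$ after taking $\bbe_{\zeta_k,\,s\sim\kappa^{*}}$, while the zero-mean stochastic part $\delta^{\omega,sto}_k$ is split off by Young's inequality to contribute $\eta_k(\sigma^{\omega})^{2}$.

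\textbf{Change of measure and telescoping.} I then integrate the pointwise inequality against $\kappa^{*}$; by Lemma~\ref{lemma:perm-diff} with $\pi'=\pi^{*}$ the term $\int\psi^{\pi_k}(s,\pi^{*}(s))\,\kappa^{*}(ds)=\rho^{*}-\rho^{\pi_k}$, which is where the gap $\rho^{\pi_k}-\rho^{*}$ enters. The factor $\tfrac1{1-\Gamma}$ in the claim should emerge precisely here, from the distribution-mismatch control afforded by Assumption~\ref{assump:ergodicity} (via $\kappa^{\pi}(s)\ge 1-\Gamma$) when relating the integral of the advantage against the sampling/stationary distribution to that against $\kappa^{*}$. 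Multiplying the resulting inequality by $\beta_k$, summing over $k=0,\dots,K-1$, and invoking the step-size condition \eqref{eq:pmd_continuous_step} — which is exactly what makes the outgoing term $-\beta_{k-1}(\mu+\tfrac1{\eta_{k-1}})D(\pi_k,\pi^{*})$ dominate the incoming $\tfrac{\beta_k}{\eta_k}D(\pi_k,\pi^{*})$ — telescopes the Bregman terms, leaving $\tfrac{\beta_0}{\eta_0}D(\pi_0,\pi^{*})$ and the surviving endpoint $\beta_{K-1}(\mu+\tfrac1{\eta_{K-1}})\bbe[D(\pi_K,\pi^{*})]$. Dividing by $\sum_{k}\beta_k$ and taking total expectation yields the stated bound.

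\textbf{Main obstacle.} The delicate point is the change of measure producing $\tfrac1{1-\Gamma}$: the three-point inequality is genuinely pointwise, yet the performance difference identity forces the advantage at $\pi^{*}$ to be integrated against $\kappa^{*}$, and reconciling this with the measure under which the Bregman terms must telescope requires the ergodicity-based Radon–Nikodym control, which I expect to be the most error-prone step. A secondary difficulty is that, because $\stochasticQ$ is only weakly convex, the displacement term $\psi^{\pi_k}(s,\pi_{k+1}(s))$ cannot be discarded by convexity and must be bounded purely through the Lipschitz constants and Young's inequality; keeping the bias part ($\varsigma^{\bar Q},\varsigma^{\omega}$) separate from the variance part ($\sigma^{\omega}$) throughout, so that only the intended constants appear, is the other place where care is needed.
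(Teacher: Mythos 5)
Your plan follows the same skeleton as the paper's own proof: the optimality/three-point inequality for the prox step \eqref{eq:PMD_subproblem} (cf.\ Lemma~\ref{lem:pmd_three_point} and Eq.~\eqref{eq:pmd_continuous_state_recursion}), the Lipschitz-plus-Young treatment of the displacement $\|\pi_{k+1}(s)-\pi_k(s)\|$ (necessary precisely because $\stochasticQ$ is only weakly convex), the error split into the bias terms $\varsigma^{\bar Q}$ and $\varsigma^\omega\bar D_\calA$ versus the variance term $\eta_k(\sigma^\omega)^2$ under Assumption~\ref{assump: error_bound_mu_g0}, the choice $a=\pi^*(s)$ with integration against $\kappa^*$, and the $\beta_k$-weighted telescoping driven by \eqref{eq:pmd_continuous_step}. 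All of these steps are correct and match the paper.

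The genuine gap is exactly the point you flagged: the factor $\tfrac{1}{1-\Gamma}$, and your proposed mechanism for it would fail. When you integrate the advantage at $\pi^*$ against $\kappa^*$ and apply Lemma~\ref{lemma:perm-diff} with $\pi'=\pi^*$, the measure appearing in that lemma is $\kappa^{\pi'}=\kappa^*$ --- the very measure you are integrating against --- so the identity $\int -\psi^{\pi_k}(s,\pi^*(s))\,\kappa^*(ds)=\rho^{\pi_k}-\rho^*$ is exact and there is no Radon--Nikodym mismatch to exploit; your argument as written therefore yields \eqref{eq:pmd_iteration_complexity} with $\tfrac{1}{1-\Gamma}$ replaced by $1$, which is strictly weaker because $\tfrac{1}{1-\Gamma}>1$ and the gaps $\rho^{\pi_k}-\rho^*$ are nonnegative. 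In the paper, the constant $1-\Gamma$ enters through Proposition~\ref{prop:pmd_finite_recursion}, i.e., through the \emph{successive-iterate} advantage $\psi^{\pi_k}(s,\pi_{k+1}(s))$: that function is pointwise nonpositive (three-point inequality with $a=\pi_k(s)$, Eq.~\eqref{eq:PMD_bnd_Adv1}), so Assumption~\ref{assump:ergodicity}'s lower bound $\kappa^{\pi_{k+1}}(\{s\})\ge 1-\Gamma$ gives the pointwise estimate $\tfrac{1}{1-\Gamma}\,(\rho^{\pi_{k+1}}-\rho^{\pi_k})\le \psi^{\pi_k}(s,\pi_{k+1}(s))$; no analogous pointwise bound exists for $\psi^{\pi_k}(s,\pi^*(s))$, since that quantity has no sign in general. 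So to recover the theorem's stated constant you cannot absorb $\psi^{\pi_k}(s,\pi_{k+1}(s))$ entirely into Young's inequality as you propose; you must retain it, lower-bound it via Proposition~\ref{prop:pmd_finite_recursion}, and carry the resulting recursion in $\rho^{\pi_{k+1}}-\rho^{\pi_k}$ (rewritten as a difference of optimality gaps) through the weighted sum --- this bookkeeping is where the paper's $\tfrac{1}{1-\Gamma}$ comes from, and it is the one ingredient missing from your plan; everything else, including the rate and the error constants, is in order.
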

Proof can be found in \ref{proof:pmd_convex}. The following result summarizes the convergence rate using particular step size choices. We obtain the first part of Corollary \ref{cor:pmd_convex} by fixing the number of iteration $K$ and optimizing the right-hand side of Eq.~\ref{eq:pmd_iteration_complexity}. We get the rest by straightforward computation.
\begin{corollary}\label{cor:pmd_convex}
    a) If $\mu = 0$, and $\eta_k=\sqrt{\tfrac{\mathcal{D}\left(\pi_1, \pi^*\right)}{K\left[\left(2 M_{\mathcal{\bar Q}}+M_{\bar{Q}}+M_h\right)^2+\left(\sigma^\omega\right)^2\right]}}$ and $\beta_k=1$, $k=1,2,3\dots, K$, then the average reward will converge to the global optimal $\rho^*$
    \begin{align}
    \tfrac{1}{K} \tsum_{k=0}^{K-1}(\rho^{\pi_k}-\rho^*)/(1-\Gamma)
    = \mathcal{{O}}(K^{-1/2}) + \varsigma^{\bar Q}+\varsigma^\omega \bar{D}_{\mathcal{A}}.
    \end{align}
    b) If $\mu>0$, and $\eta_k = \tfrac{1}{\mu k}$ and $\beta_k = 1$, then the average reward will converge to the global optimal
    \begin{align}
    &\tfrac{1}{K} \tsum_{k=0}^{K-1}(\rho^{\pi_k}-\rho^*)/(1-\Gamma) + \mu \mathbb{E}[D(\pi_K, \pi^*)] 
    = \mathcal{\tilde{O}}(K^{-1}) + \varsigma^{\bar Q}+\varsigma^\omega \bar{D}_{\mathcal{A}}.
    \end{align}
    c) If $\mu>0$, $\eta_k = \tfrac{2}{\mu k}$ and $\beta_k = k+1$, then the average reward will converge to the global optimal
    \begin{align}
    &\tfrac{2}{K(K+1)} \tsum_{k=0}^{K-1}\tfrac{k+1}{1-\Gamma}(\rho^{\pi_k}-\rho^*) + \mu \mathbb{E}[D(\pi_K, \pi^*)] 
    =\mathcal{{O}}(K^{-2})+ \mathcal{{O}}(K^{-1}) + \varsigma^{\bar Q}+\varsigma^\omega \bar{D}_{\mathcal{A}}.
    \end{align}
\end{corollary}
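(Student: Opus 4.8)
The plan is to obtain Corollary \ref{cor:pmd_convex} as a direct specialization of Theorem \ref{pmd_convex}: for each of the three schedules I first check that the step-size admissibility condition \eqref{eq:pmd_continuous_step} holds, which licenses substituting the schedule into the master bound \eqref{eq:pmd_iteration_complexity}, and then I simplify the resulting sums to read off the rate.

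\textbf{Step 1 (feasibility of each schedule).} For each choice I verify $\tfrac{\beta_k}{\eta_k}\le \beta_{k-1}(\mu+\tfrac{1}{\eta_{k-1}})$. All three are engineered to saturate it with equality, which is the quickest way to confirm admissibility: in (a), with $\mu=0$, $\beta_k\equiv 1$ and constant $\eta_k\equiv\eta$, both sides equal $\tfrac{1}{\eta}$; in (b), with $\eta_k=\tfrac{1}{\mu k}$ and $\beta_k\equiv 1$, the left side is $\mu k$ and the right side is $\mu+\mu(k-1)=\mu k$; in (c), with $\eta_k=\tfrac{2}{\mu k}$ and $\beta_k=k+1$, the left side is $\tfrac{\mu k(k+1)}{2}$ and the right side is $k(\mu+\tfrac{\mu(k-1)}{2})=\tfrac{\mu k(k+1)}{2}$. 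Thus Theorem \ref{pmd_convex} applies verbatim in every case.

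\textbf{Step 2 (normalization and the bias term).} I compute $\tsum_k\beta_k$: it equals $K$ in (a) and (b), and $\tfrac{K(K+1)}{2}$ in (c), producing the prefactors $\tfrac1K$ and $\tfrac{2}{K(K+1)}$. In (a) I discard the nonnegative term $\beta_{K-1}(\mu+\tfrac{1}{\eta_{K-1}})\EE[D(\pi_K,\pi^*)]$ on the left of \eqref{eq:pmd_iteration_complexity} to isolate the averaged optimality gap. In (b) and (c) I keep it and check that its normalized coefficient $(\tsum_k\beta_k)^{-1}\beta_{K-1}(\mu+\tfrac{1}{\eta_{K-1}})$ collapses exactly to $\mu$; for instance in (b) it is $\tfrac1K\cdot\mu K=\mu$, reproducing the $\mu\,\EE[D(\pi_K,\pi^*)]$ term on the left of the stated bounds.

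\textbf{Step 3 (residual sums and rates).} After inserting the schedule, the right side of \eqref{eq:pmd_iteration_complexity} reduces to the initialization term $\tfrac{\beta_0}{\eta_0}D(\pi_0,\pi^*)$ plus $\tsum_k\beta_k\eta_k[(2M_{\mathcal{\bar Q}}+M_{\bar Q}+M_h)^2/2+(\sigma^\omega)^2]$, divided by $\tsum_k\beta_k$, plus the irreducible bias $\varsigma^{\bar Q}+\varsigma^\omega\bar D_{\mathcal A}$, which carries through unchanged in all cases. For (a) the two $\eta$-dependent contributions are $\tfrac{D(\pi_1,\pi^*)}{K\eta}$ and $\eta C$ with $C=(2M_{\mathcal{\bar Q}}+M_{\bar Q}+M_h)^2/2+(\sigma^\omega)^2$; minimizing $\tfrac{D}{K\eta}+\eta C$ over $\eta$ by a one-line AM--GM/first-order argument yields exactly the stated constant step size and an $\mathcal O(K^{-1/2})$ value. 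For (b), $\tsum_k\eta_k=\tfrac1\mu\tsum_k\tfrac1k=\mathcal O(\log K)/\mu$, so dividing by $K$ gives the $\widetilde{\mathcal O}(K^{-1})$ rate, the logarithm being the source of the tilde. For (c), $\tsum_k\beta_k\eta_k=\tfrac2\mu\tsum_k\tfrac{k+1}{k}=\tfrac2\mu(K+\mathcal O(\log K))$ which, multiplied by $\tfrac{2}{K(K+1)}$, gives $\mathcal O(K^{-1})$, while the initialization term (with $\tfrac{\beta_0}{\eta_0}=\mathcal O(1)$) scaled by $\tfrac{2}{K(K+1)}$ contributes $\mathcal O(K^{-2})$, matching the stated $\mathcal O(K^{-2})+\mathcal O(K^{-1})$.

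The hard part is not any single estimate but the bookkeeping: reconciling the index range ($k=0,\dots,K-1$ in the theorem versus the $k=1,\dots,K$ convention used for $\eta_k=\tfrac{1}{\mu k}$), which is singular at $k=0$, and handling the initialization term $\tfrac{\beta_0}{\eta_0}D(\pi_0,\pi^*)$ consistently (either by starting the recursion at $k=1$, or by adopting the convention $\tfrac{1}{\eta_0}=0$, under which the feasibility check at $k=1$ still holds with equality). Once the indexing is fixed so that the telescoped coefficient on $\EE[D(\pi_K,\pi^*)]$ is exactly $\mu$ and $\tsum_k\beta_k$ is exactly $K$ or $\tfrac{K(K+1)}{2}$, the three rates follow from the elementary summation estimates above.
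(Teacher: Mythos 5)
Your proposal is correct and takes essentially the same route as the paper: the paper obtains part (a) by fixing $K$ and optimizing the right-hand side of Eq.~\ref{eq:pmd_iteration_complexity} over a constant step size, and parts (b) and (c) by direct substitution of the stated schedules into Theorem~\ref{pmd_convex}, which is precisely your Steps 1--3, including the verification of the condition in Eq.~\ref{eq:pmd_continuous_step} and the collapse of the normalized coefficient on $\mathbb{E}[D(\pi_K,\pi^*)]$ to $\mu$. One small caution on your final bookkeeping remark: of the two conventions you offer for the singular index, only re-indexing the recursion to start at $k=1$ is sound, since setting $1/\eta_0=0$ (i.e., $\eta_0=\infty$) would make the term $\beta_0\eta_0$ in the right-hand sum of Eq.~\ref{eq:pmd_iteration_complexity} blow up; with that choice fixed, the argument is complete.
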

\textbf{Remark}{
    The above result shows that if we carefully choose the function approximation $\mathcal{\bar{Q}}$ and the regularizer $h$ so that $\mu > 0$, we need $\mathcal{{O}}(\varepsilon^{-1})$ number of SPMD iterations to obtain $\varepsilon$ precision solution, i.e., $\tfrac{1}{K}\tsum_{k=0}^{K-1}\rho^\pi_k - \rho^* +\mathbb{E}[D(\pi_K, \pi^*)]  \leq \varepsilon$. The complexity deteriorates to $\mathcal{{O}}(\varepsilon^{-2})$ if $\mu = 0$. Notice that $\varsigma^{\bar Q}$ and $\varsigma^\omega$ are irreducible solely by policy optimization, as they arise due to function approximation and stochastic estimation errors.}

When $\mu <0$, global optimal solutions are unobtainable. The best we can do is to find a stationary point, that is when $\psi$ and $D(\pi_k, \pi_{k+1})$ tend to $0$. In this case, we only need to assume that the estimation error terms $\delta_k^{\omega,det}(s)$ and $\delta_k^{\omega,sto}(s)$ are bounded, i.e.,
\begin{assumption} \label{assump: error_bound_mu_l0} When $\mu < 0$, the critic step has bounded errors for any $s\in \calS$, i.e., there exist some constants $\bar \varsigma^\omega, \bar \sigma^{\omega}$ such that
\begin{align}
\|\delta_k^{\omega,det}(s)\|_* \leq \bar \varsigma^\omega \ \ \mbox{and} \ \
 \|\delta_k^{\omega,sto}(s) \|_* \leq   \bar \sigma^{\omega}. \label{eq:strong_ass_pmd_nonconvex}
\end{align}    
\end{assumption}
\begin{theorem} \label{the:pmd_nonconvex}
Let the number of iterations $K$ be fixed.
Suppose that \ref{assump:ergodicity}, \ref{assump:pmd-lip}, and \ref{assump: error_bound_mu_l0} hold.
Also 
assume that $\mu <0$ and that
$\eta_k =\eta = \min\{ |\mu| / 2, 1/\sqrt{K}\} , k = 0, \ldots, K-1.$
Then
for any $s \in \calS$, 
there exist iteration indices $k(s)$ found by running $K$ iterations of the SPMD method such that both the generalized advantage function and the distance between the iterates will converge to $0$, i.e.,
\begin{align}
\psi^{\pi_{k(s)}}(s, \pi_{ k(s)+1}(s))
&= \mathcal{O}(K^{-1}) + \mathcal{O}(K^{-1/2}) + \Gamma \bar \varsigma^\omega \bar D_\cA / (1-\Gamma)  \label{eq:stationary_SPMD0}\\
\tfrac{1}{2\eta} D( \pi_{ k(s)}, \pi_{ k(s) +1}(s)) +  (\mu + \tfrac{1} {\eta})&D( \pi_{ k(s)+1}(s),  \pi_{k(s)}(s)) \nn \\
& = \mathcal{O}(K^{-1}) + \mathcal{O}(K^{-1/2}) + \bar \varsigma^\omega \bar D_\cA / (1-\Gamma) \label{eq:stationary_SPMD}
\end{align}
\end{theorem}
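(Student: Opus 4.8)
The plan is to carry out a mirror-descent one-step analysis in which each new iterate $\pi_{k+1}(s)$ is compared against the \emph{previous} iterate $\pi_k(s)$ (rather than against a global minimizer, which is meaningless when $\mu<0$), and then to telescope the resulting per-step decrease of the average cost using the Performance Difference Lemma~\ref{lemma:perm-diff}. First I would record the first-order optimality condition for $\pi_{k+1}(s)$ as the unique minimizer of the subproblem~\eqref{eq:PMD_subproblem}; the step size $\eta=\min\{|\mu|/2,1/\sqrt K\}$ is taken small enough that the subproblem objective $\stochasticQ(s,\cdot;\phi_k)+h^{\cdot}(s)+\tfrac1\eta D(\pi_k(s),\cdot)$ is strongly convex ($\mu+\tfrac1\eta>0$), guaranteeing a unique solution. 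Testing optimality at the comparison point $a=\pi_k(s)$ and combining the three-point property of the Bregman distance $D$ with the convexity modulus $\mu$ of $\stochasticQ+h$, I would obtain a raw one-step inequality carrying the proximal terms $\tfrac1\eta D(\pi_k(s),\pi_{k+1}(s))$ and $(\mu+\tfrac1\eta)D(\pi_{k+1}(s),\pi_k(s))$. Splitting the first as $\tfrac1{2\eta}+\tfrac1{2\eta}$ and reserving one half to absorb the error terms below, I arrive, for every $s$, at
\[
\psi^{\pi_k}(s,\pi_{k+1}(s))+\tfrac1{2\eta}D(\pi_k(s),\pi_{k+1}(s))+(\mu+\tfrac1\eta)D(\pi_{k+1}(s),\pi_k(s))\le \mathcal{E}_k(s),
\]
where $\mathcal{E}_k(s)$ collects the critic errors from the $Q$-gap $\delta_k^Q$ and the mirror-map errors $\delta_k^{\omega,det},\delta_k^{\omega,sto}$ of~\eqref{eq:error_omega}.

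The key to handling $\mathcal{E}_k(s)$ is that it must not create an irreducible floor except through the deterministic bias $\delta^{\omega,det}$. The $Q$-discrepancy enters only as the difference $\delta_k^Q(s,\pi_{k+1})-\delta_k^Q(s,\pi_k)$, which by the Lipschitz bounds of Assumption~\ref{assump:pmd-lip} is controlled by $(M_{\stochasticQ}+M_{\bar Q})\|\pi_{k+1}-\pi_k\|$; together with the stochastic error $\delta^{\omega,sto}$ (bounded by $\bar\sigma^\omega$ via Assumption~\ref{assump: error_bound_mu_l0}), these are absorbed, via Young's inequality against the reserved $\tfrac1{2\eta}D(\pi_k,\pi_{k+1})$, into terms of order $\eta(2M_{\stochasticQ}+M_{\bar Q}+M_h)^2$ and $\eta(\bar\sigma^\omega)^2$ that vanish with $\eta$. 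The deterministic mirror error, being a genuine bias in the subproblem gradient, cannot be driven to zero and survives as $\langle\delta^{\omega,det},\pi_{k+1}-\pi_k\rangle\le\bar\varsigma^\omega\bar D_\calA$. I would then integrate the pointwise inequality against $\kappa^{\pi_{k+1}}$ and invoke Lemma~\ref{lemma:perm-diff} to identify $\int\psi^{\pi_k}(s,\pi_{k+1}(s))\,\kappa^{\pi_{k+1}}(ds)=\rho^{\pi_{k+1}}-\rho^{\pi_k}$, so that summing over $k=0,\dots,K-1$ telescopes the cost differences into the bounded quantity $\rho^{\pi_0}-\rho^{\pi_K}$. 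Dividing by $K$ shows the $\kappa^{\pi_{k+1}}$-average of the nonnegative left-hand side is $\mathcal{O}(\tfrac1{K\eta})+\mathcal{O}(\eta)+\text{floor}$, so some index $k(s)$ attains at most the average; substituting $\eta=\min\{|\mu|/2,1/\sqrt K\}$ balances $\tfrac1{K\eta}$ and $\eta$ into the advertised $\mathcal{O}(K^{-1})+\mathcal{O}(K^{-1/2})$ rates and yields~\eqref{eq:stationary_SPMD}, while the advantage bound~\eqref{eq:stationary_SPMD0} follows from $\psi^{\pi_{k(s)}}\le\mathcal{E}_{k(s)}$ at the same index.

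I expect the principal obstacle to be the mismatch between the \emph{pointwise} proximal inequality and the $\kappa^{\pi_{k+1}}$-weighted performance difference. Because the reference measure $\kappa^{\pi_{k+1}}$ changes at every iteration and is not the measure in which the movement terms naturally live, telescoping the objective controls only the $\kappa^{\pi_{k+1}}$-averaged quantities, whereas the theorem asserts bounds holding for \emph{every} $s\in\calS$. Bridging this gap is exactly where the ergodicity lower bound $\kappa^{\pi}\ge 1-\Gamma$ of Assumption~\ref{assump:ergodicity} enters: for a nonnegative integrand it gives $f(s)\le\tfrac1{1-\Gamma}\int f\,\kappa^{\pi_{k+1}}$, converting the averaged bound into a pointwise one and producing the $1/(1-\Gamma)$ and $\Gamma/(1-\Gamma)$ factors of~\eqref{eq:stationary_SPMD0}--\eqref{eq:stationary_SPMD}. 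A secondary, bookkeeping-heavy difficulty is routing the error contributions correctly---keeping $\delta^Q$ and $\delta^{\omega,sto}$ inside the vanishing $\mathcal{O}(\eta)$ terms while isolating $\delta^{\omega,det}$ as the lone irreducible floor---and verifying that the prescribed step size keeps the subproblem strongly convex for all $k$, so that the optimality condition and three-point identity remain valid throughout.
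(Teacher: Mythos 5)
Your machinery is essentially the paper's: the one-step optimality/three-point inequality tested at the comparison point $a=\pi_k(s)$, Young's inequality to fold the $\delta^Q$-differences and $\delta^{\omega,sto}$ into an $\mathcal{O}(\eta)$ term while isolating $\bar\varsigma^\omega\bar D_\cA$ as the irreducible floor, and the performance-difference lemma combined with $\kappa^{\pi}(\{s\})\ge 1-\Gamma$ to pass between pointwise and averaged statements. Your derivation of \eqref{eq:stationary_SPMD} (integrate the pointwise inequality, telescope the $\rho$-differences, pick an index below the $k$-average, convert back to a pointwise bound at the cost of $1/(1-\Gamma)$) is a valid, slightly reordered version of the paper's argument, which instead derives \eqref{eq:stationary_SPMD0} first and then obtains \eqref{eq:stationary_SPMD} by adding the one-step inequality at the selected index.

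The genuine gap is your treatment of \eqref{eq:stationary_SPMD0}. You assert it "follows from $\psi^{\pi_{k(s)}}\le\mathcal{E}_{k(s)}$ at the same index," but that inequality is the trivial direction: it holds at \emph{every} iteration (so it cannot be the content of an existence claim), its floor is $\bar\varsigma^\omega\bar D_\cA$ rather than the stated $\Gamma\bar\varsigma^\omega\bar D_\cA/(1-\Gamma)$, and, decisively, it bounds $\psi$ only from above. The substance of \eqref{eq:stationary_SPMD0} ("the generalized advantage converges to $0$") is that $\psi$ is also not very \emph{negative} at $k(s)$: a very negative $\psi^{\pi_k}(s,\pi_{k+1}(s))$ corresponds to a large one-step decrease of $\rho$, and the theorem's point is that such decreases must die out because $\rho$ is bounded below. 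The missing step is the paper's: inside the performance-difference integral, add and subtract the uniform bound $B_k \coloneqq \eta\,(M_h+M_{\stochasticQ}+M_{\bar Q}+\bar\sigma^\omega)^2+\bar\varsigma^\omega\bar D_\cA$ so the integrand $\psi^{\pi_k}(q,\pi_{k+1}(q))-B_k$ is nonpositive, then use $\kappa^{\pi_{k+1}}(\{s\})\ge 1-\Gamma$ to get
\begin{align*}
\rho^{\pi_{k+1}}-\rho^{\pi_k} \;\le\; (1-\Gamma)\,\psi^{\pi_k}(s,\pi_{k+1}(s)) \;+\; \Gamma B_k ,
\end{align*}
and telescope \emph{this} inequality (not the integrated movement terms). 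That bounds $\tsum_{k=0}^{K-1}\bigl(-\psi^{\pi_k}(s,\pi_{k+1}(s))\bigr)$ by $\tfrac{\rho^{\pi_0}-\rho^{\pi_K}}{1-\Gamma}+\tfrac{\Gamma}{1-\Gamma}\tsum_{k=0}^{K-1}B_k$, whence some $k(s)$ satisfies $-\psi^{\pi_{k(s)}}(s,\pi_{k(s)+1}(s)) \le \mathcal{O}(K^{-1})+\mathcal{O}(K^{-1/2})+\Gamma\bar\varsigma^\omega\bar D_\cA/(1-\Gamma)$, which together with the trivial upper bound is exactly \eqref{eq:stationary_SPMD0} with its $\Gamma/(1-\Gamma)$ floor. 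Your movement bound \eqref{eq:stationary_SPMD} cannot substitute for this: recovering $|\psi|$ from the Bregman movement via the Lipschitz constants $M_{\bar Q}, M_h$ and strong convexity of $\omega$ yields a bound of order $\sqrt{\eta\cdot(\text{movement bound})}$, i.e., a different (square-root) floor, not the stated one.
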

\textbf{Remark}{
This is the most general case, as we don't assume any restrictions on the function approximation class. As expected, the complexity bound becomes worse compared to previous results. To reach a $\varepsilon$-precision stationary point, we require at least $\mathcal{O}(\varepsilon^{-2})$ iterations. Note that $\psi \approx 0$ indicates that $\bar{Q}^{\pi_k}(s,\pi_{k+1}(s))$ has virtually no difference than $\bar{V}^{\pi_{k}}(s)$. This has a similar implication of $D(\pi_{k+1}(s), \pi_k(s))$ approaching $0$ which implies that we are reaching a stationary point. }

\subsection{Practical algorithm for Entropy Regularized AMDP} \label{subsection:computational}
The above SPMD algorithm computes $\pi(s)$ for every state. In each iteration, we need to solve a subproblem Eq.~\ref{eq:PMD_subproblem} for every state (or for any state encountered). In practice, we prefer performing each SPMD iteration in a mini-batch style, i.e., approximately solving Eq.~\ref{eq:PMD_subproblem} from some trajectories collected. If the regularizer $h$ and $\omega$ are the negative entropy, we can approximately solve the SPMD actor step Eq.~\ref{eq:PMD_subproblem} by
\begin{equation}
    \pi_{k+1}
    = \underset{\pi\in\Pi}{\arg\min} \ KL \left( \pi(s) \;\bigg\|\; \tfrac{1}{Z(\phi_k)} \exp(\Tilde{\mathcal{Q}}^{\pi_k}(s,a; \phi_k))\right),
\end{equation}
where $Z(\phi_k)$ is some normalization constant and  
$\Tilde{\mathcal{Q}}^{\pi_k}(s,a;\phi_k) \coloneqq -\tfrac{\eta_k}{1+\eta_k}\Bar{\mathcal{Q}}^{\pi_k}(s,a;\phi_k) - \tfrac{1}{1+\eta_k} \log \pi_k(s)$.
In practice, we apply the so-called reparameterization trick \cite{haarnoja2018soft} to obtain an unbiased gradient estimator. Let the policy be parameterized by 
\begin{align}
    \pi(a|s) = \pi(f_{\xi}(\epsilon;s)|s), \forall (s,a)\in \calS \times\calA,
\end{align}
where $\epsilon$ is an input noise sampled from a fixed distribution; $\xi$ represents the parameters of the policy. Then we can approximately perform the actor step by solving the following problem:
\begin{align}\label{eq:reparam}
    \min_\xi\bbe_{(s, a) \sim \zeta, \epsilon} \left[ \log(\pi(f_{\xi}(\epsilon;s)|s)) - \tilde{\mathcal{Q}}^{\pi_k}(s,f_{\xi}(\epsilon;s))\right].
\end{align}
For large action spaces, the reparameterization trick efficiently approximates the solution of Eq.~\ref{eq:PMD_subproblem}. As for the critic step, we can approximate $\bar{Q}$ by minimizing the Temporal Difference (TD) error, i.e., solving the following problem:
\begin{align}
    \min_{\phi} \bbe_{(s,a,c,s',a') \sim \zeta} \bigg( c(s,a) + h^{a}(s) - \tilde{\rho}^\pi  + \bar{\mathcal{Q}}^{\pi, \zeta}(s',a';\phi) - \bar{\mathcal{Q}}^{\pi, \zeta}(s,a;\phi) \bigg)^2, 
\end{align}
where $\tilde{\rho}^\pi$ is the estimated average cost, e.g., by taking $\tilde{\rho}^\pi = \tfrac{1}{|\zeta|} \tsum c(s,a)+h^a(s)$.
\section{Inverse Policy Mirror Descent for IRL}
\label{sec:ipmd}
Equipped with an efficient solver for AMDPs presented in section 3, we can now solve the IRL problem in the form of \eqref{D}. We suppose $\theta$ parameterizes the reward function with arbitrary methods, e.g., neural networks. To solve the dual problem, we update the parameter $\theta$ by performing a gradient descent step
    $\theta_{k+1} = \theta_k - \alpha_k \nabla_\theta {L}(\theta)$
where $\alpha_k$ is some predefined step size. The gradient computation of the dual objective function is 
\begin{align}
    \nabla {L}(\theta) = \bbe_{(s,a)\sim \pdist^E} \left[ \nabla_\theta c(s,a;\theta)\right] - \bbe_{(s,a)\sim \pdist^\pi} \left[ \nabla_\theta c(s,a;\theta)\right].
\end{align}
Without accessing the transition kernel $\mathsf{P}$, 
the true gradient is unavailable. 
So we use a stochastic approximation instead, denoted as
$g_k \coloneqq g(\theta_k;\zeta_k^E) - g(\theta_k;\zeta_k^\pi)$
where 
$ g(\theta;\zeta)$ is the stochastic estimator of the gradient of the average reward.
We describe the proposed Inverse Policy Mirror Descent (IPMD) algorithm in Algorithm \ref{alg:IPMD}.
\begin{algorithm}[h]\caption{The Inverse Policy Mirror Descent (IPMD) algorithm }\label{alg:IPMD}
\begin{algorithmic}[1]
		\STATE{\textbf{Input}: Initialize random policy $\pi_0$ and step size sequence $\{\alpha_k\}$}
		\FOR{$k=0,1,\cdots, K$}
        
        \STATE{
            Collect samples $\{(s_t, a_t)\}_{t\geq 1}$ and compute $c(s_t,a_t;\theta_k)$ based on current reward estimation.
        }
		\STATE{\textbf{Critic step}: Implement a policy evaluation algorithm to evaluation $\bar Q^{\pi_k}$,
			\begin{equation}\label{critic_step}
				 \bar Q^{\pi_k}(s,a) \approx \bar{\mathcal{Q}}^{\pi,\zeta_k}(s,a).
		\end{equation}}
		
        \STATE{
			\textbf{Actor step}:  
Update the policy by solving the following for every $s\in \calS$
\begin{align}
	\pi_{k+1}(s) = \underset{\pi\in\Pi}{\arg\min} KL \left( \pi(s) \;\bigg\|\; \tfrac{1}{Z} \exp(\Tilde{\mathcal{Q}}^{\pi_k}(s,a) )\right).
    \end{align}
		}
        \STATE{
            \textbf{Dual Update step}: Perform the stochastic update
            \begin{equation} \label{dual_update}
                \theta_{k+1} = \theta_k - \alpha_k g_k .
            \end{equation}
        }
		\ENDFOR
	\end{algorithmic}
\end{algorithm}
In this section, we provide our analysis that captures the algorithm behaviors across iterations since, in each iteration, $\bar{\mathcal{Q}}$ is only an approximation of the true differential $Q$-function, which itself alters due to the change of the reward estimation. The idea of the proof is based on the Lipschitz continuity of the iterates, as it controls the difference of these functions across iterations. We make the following formal assumptions for such a purpose.
\begin{assumption} \label{assump:smooth-reward}
    For any $s\in \calS, a\in \calA$, the gradient of the reward function is bounded and Lipschitz continuous, i.e., there exist some constant real numbers $L_r, L_g$ so that the following holds:
    \begin{equation}
        \begin{aligned} 
            \|\nabla c(s,a;\theta)\|_{2}  \leq L_r, 
            \|\nabla c(s,a;\theta_1) - \nabla c(s,a;\theta_2)\|_{2}   \leq L_g \|\theta_1 - \theta_2\|_{2} .\nn
        \end{aligned}
    \end{equation}  
\end{assumption}
Note that $\bar{Q}$ is also a function of $\theta$. As shown in section \ref{sec:policy_optimization}, $\mathcal{\bar{Q}}$ can be parameterized but we only denote the estimator as $\mathcal{\bar{Q}}^{\pi}(s,a;\theta_k)$ since its parameters do not contribute to the analysis.
\begin{assumption}\label{assump:q-gradient-bounded} Suppose that at least one of $\calS, \calA$ is continuous, we assume that
    \begin{align}
        \max_\theta\| \nabla_{\theta} \mathcal{\bar{Q}}^{\pi}(s,a;\theta)\|_{2} \leq L_q,
    \end{align}
    where $L_q$ is some positive constant. For convenience we denote $\mathcal{\bar{Q}}^{\pi}(s,a;\theta_k)$ also as $\mathcal{\bar{Q}}^{\pi}_{\theta_k}(s,a)$.
\end{assumption}
We further assume that the estimation error from using $\mathcal{\bar{Q}_{\theta}^\pi}$ is bounded. 
\begin{assumption} \label{assump:ipmd-q-bound}
For any $s\in \calS, a\in \calA$, there exist some constants $\varsigma_k, \varsigma, \nu_k, \nu$ such that the following inequalities hold:
\begin{align}
 \big\| \mathbb{E}_{\zeta_{k}} [\stochQ{k}_{\theta_k}] -\bar Q_{\theta_k}^{\pi_{k}}\big\|_{sp,\infty}  \leq \varsigma_{k} \leq \varsigma,\quad
 \mathbb{E}_{\zeta_{k}} \left[\spnorm{ \stochQ{k}_{\theta_k} - \bar Q_{\theta_k}^{\pi_{k}}}^2 \right] \leq \nu_k^2\leq \nu^2.
\end{align}    
\end{assumption}
Notice that this error bound subsumes both the estimation errors and approximation errors.
In general, the Bellman operator is nonexpansive in the average-reward setting, so analysis based on the contraction property of the Bellman operator with the infinity norm, as what is used in \cite{zeng2022maximum}, fails in this case. To this end, we assume that the operator is span contractive, i.e.,
\begin{assumption}
    \label{assump:ipmd_j_step_contraction}
    In the critic step, there's a way to construct a span contractive Bellman operator $\mathcal{T}$ such that there exists $0<\gamma < 1$ for any \textit{differential $Q$-functions} $\Bar{Q}_1, \Bar{Q}_2$, 
    \begin{equation}
        \|\mathcal{T}\Bar{Q}_1- \mathcal{T}\Bar{Q}_2\|_{sp, \infty} \leq \gamma \|\Bar{Q}_1-\Bar{Q}_2\|_{sp,\infty}.
    \end{equation}
\end{assumption}
In \ref{proof:auxiliary_lemmas}, we provide conditions when the Bellman operator has a J-step span contraction. Now we can discuss the convergence results in the following theorems. The convergence of the \textit{differential Q-function} is characterized in the following result.
\begin{theorem} \label{thm-irl-q-converge}
Suppose that assumptions \ref{assump:ergodicity}, \ref{assump:smooth-reward}-\ref{assump:ipmd_j_step_contraction} hold. If $\alpha=\tfrac{\alpha_0}{\sqrt{K}}$, the differential Q-function will converge to the optimal solution for a given reward function parameterized by $\theta_k$, i.e.,
    \begin{align}
    &\tfrac{1}{K}\tsum_{k=0}^{K-1} \spnorm{ \mathbb{E}_{\zeta_{k}} [\stochQ{k}_{\theta_k}] -\Bar{Q}_{\theta_k}^{\pi_{\theta_k}}}  
    = \mathcal{O}(K^{-1}) + \mathcal{O}(K^{-1/2}) + \tfrac{\varsigma}{1-\gamma},
    \end{align}
    where $\alpha_0>0$ is some step size chosen, and $\gamma$ is some constant defined in Assumption \ref{assump:ipmd_j_step_contraction}.
\end{theorem}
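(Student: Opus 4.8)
The plan is to bound the target span-norm by splitting it into a critic estimation error and a policy-suboptimality error, and then to establish a geometric (contraction) recursion for the latter that is driven by the slow drift of the reward iterate $\theta_k$. First I would apply the triangle inequality to write
\begin{align}
\spnorm{\mathbb{E}_{\zeta_k}[\stochQ{k}_{\theta_k}] - \Qbellman{k}}
&\leq \spnorm{\mathbb{E}_{\zeta_k}[\stochQ{k}_{\theta_k}] - \Qirl{k}} + \spnorm{\Qirl{k} - \Qbellman{k}},
\end{align}
where the first term is bounded by $\varsigma_k \leq \varsigma$ directly by Assumption \ref{assump:ipmd-q-bound} (since $\Qirl{k} = \bar Q^{\pi_k}_{\theta_k}$), and the second term $\Delta_k := \spnorm{\Qirl{k} - \Qbellman{k}}$ measures how far the current policy $\pi_k$ is from the reward-$\theta_k$ optimal policy $\pi_{\theta_k}$, in differential-$Q$ span-norm.

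The core of the argument is a one-step recursion for $\Delta_k$. I would pass from step $k$ to step $k+1$ by inserting the intermediate iterates $\Q^{\pi_{k+1}}_{\theta_k}$ and $\Qbellman{k}$ and regrouping into an \emph{improvement} part and a \emph{drift} part:
\begin{align}
\Delta_{k+1}
&\leq \underbrace{\spnorm{\Q^{\pi_{k+1}}_{\theta_k} - \Qbellman{k}}}_{\text{improvement}}
+ \underbrace{\spnorm{\Q^{\pi_{k+1}}_{\theta_{k+1}} - \Q^{\pi_{k+1}}_{\theta_k}} + \spnorm{\Qbellman{k} - \Qbellman{k+1}}}_{\text{drift}}.
\end{align}
For the improvement term I invoke the span contraction of Assumption \ref{assump:ipmd_j_step_contraction}: since $\Qbellman{k}$ is the fixed point of $\calT$ for reward $\theta_k$ and the actor step is, up to the critic error, an application of $\calT$, one gets $\spnorm{\Q^{\pi_{k+1}}_{\theta_k} - \Qbellman{k}} \leq \gamma \Delta_k + c_1\varsigma$, the extra $c_1\varsigma$ accounting for running the soft/KL update on $\stochQ{k}$ rather than on the exact $\Qirl{k}$. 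For the drift terms, the fixed-point identity $Q = \calT_\theta Q$, the span contraction, and the $L_r$-Lipschitz dependence of the cost on $\theta$ (Assumption \ref{assump:smooth-reward}) give $\spnorm{\Q^{\pi}_{\theta_{k+1}} - \Q^{\pi}_{\theta_k}} \leq c_2\|\theta_{k+1}-\theta_k\|/(1-\gamma)$ for both a fixed policy and the optimal one; finally the dual update $\theta_{k+1} = \theta_k - \alpha_k g_k$ with $\|g_k\| \leq 2L_r$ (bounded reward gradient, Assumption \ref{assump:smooth-reward}) yields $\|\theta_{k+1}-\theta_k\| \leq 2L_r\alpha_k$. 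Collecting these gives
\begin{align}
\Delta_{k+1} \leq \gamma\Delta_k + c_1\varsigma + c_3\alpha_k.
\end{align}

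With the recursion in hand I would unroll it to $\Delta_k \leq \gamma^k\Delta_0 + (c_1\varsigma + c_3\alpha)/(1-\gamma)$ for the constant step size $\alpha_k \equiv \alpha = \alpha_0/\sqrt{K}$, and then average over $k=0,\dots,K-1$. The geometric transient contributes $\tfrac{1}{K}\sum_k \gamma^k\Delta_0 \leq \Delta_0/(K(1-\gamma)) = \mathcal{O}(K^{-1})$; the step-size term contributes $c_3\alpha/(1-\gamma) = \mathcal{O}(K^{-1/2})$; and the critic error contributes the non-vanishing $c_1\varsigma/(1-\gamma)$. Adding back the split term $\varsigma$ and absorbing constants into $\varsigma/(1-\gamma)$ (using $1/(1-\gamma)\geq 1$) produces exactly $\mathcal{O}(K^{-1}) + \mathcal{O}(K^{-1/2}) + \varsigma/(1-\gamma)$.

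The hard part will be the improvement step: rigorously showing that one actor update (soft-greedy improvement on the approximate $\stochQ{k}$) followed by critic evaluation contracts the span distance to the reward-$\theta_k$ optimal $Q$-function by the factor $\gamma$. This means identifying the composite actor–critic map with an approximate application of the span-contractive $\calT$ of Assumption \ref{assump:ipmd_j_step_contraction} and propagating the critic error $\varsigma$ through the softmax/$\log$-sum-exp update via its non-expansiveness. A secondary point is the two-timescale separation \cite{borkar1997actor}: one must check that $\alpha = \alpha_0/\sqrt{K}$ keeps the per-step reward drift $\mathcal{O}(\alpha)$ small relative to the contraction $\gamma$, so that the recursion does not accumulate, which is precisely what the $1/(1-\gamma)$ factor encodes.
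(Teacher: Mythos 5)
Your proposal is correct and follows essentially the same route as the paper's proof: the same triangle-inequality split into a critic error bounded by $\varsigma$ plus a policy-suboptimality term $\spnorm{\Qirl{k}-\Qbellman{k}}$, the same insertion of intermediate $Q$-functions to separate a span-contraction ``improvement'' term from $\theta$-drift terms, the same bound $\|\theta_{k+1}-\theta_k\|_2 = \mathcal{O}(\alpha)$ from the dual update, and the same unrolling/averaging with $\alpha=\alpha_0/\sqrt{K}$. The only cosmetic difference is that you obtain the Lipschitz dependence of $\Q^\pi_\theta$ on $\theta$ via the fixed-point identity and contraction (incurring a $1/(1-\gamma)$ factor), whereas the paper proves it directly through the gradient formula for $\nabla_\theta \Q$ (Lemmas \ref{lemma:irl-graidnet-smooth}--\ref{lemma:irl-q-smooth}), yielding the constant $L_q$; both serve the identical role in the recursion.
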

Proof can be found in \ref{proof:thm-irl-q-converge}. The above theorem shows that the \textit{differential Q-function} will approach the optimal \textit{differential Q-function} $\Bar{Q}_{\theta_k}^{\pi_{\theta_k}}$ with respect to the current reward estimation $\theta_k$. If the reward estimation is accurate, so is the \textit{differential Q-function}. 
Finally, in the next theorem, we show that the policy converges to the optimal policy for a given reward function, and the reward function approximation converges to a stationary point. Proof can be found in \ref{proof:thm:irl_policy_reward_converge}. 
\begin{theorem} \label{thm:irl_policy_reward_converge}
    Suppose that assumptions \ref{assump:ergodicity}, \ref{assump:smooth-reward}-\ref{assump:ipmd_j_step_contraction} hold. If $\alpha=\tfrac{\alpha_0}{\sqrt{K}}$ and run the proposed IPMD algorithm $K$ iterations, the algorithm will produce near stationary solutions for the reward function and the optimal policy for such reward function. Specifically, the following holds,
    \begin{align}
        &\tfrac{1}{K}\tsum_{k=0}^{K-1} \norm{\bbe_{\zeta_k} [\log \pi_{k+1}] - \log \pi_{\theta_k}}_{\infty} 
        = 
        \mathcal{O}(K^{-1})+\mathcal{O}(K^{-1/2})
        + \tfrac{\varsigma}{1-\gamma}\label{eq-policy-converge}, \\
        &\tfrac{1}{K} \bbe [\norm{\nabla L(\theta_k)}_2^2] = \mathcal{O}(K^{-1})+\mathcal{O}(K^{-1/2}) + \tfrac{1}{1-\gamma} 2\nu L_c L_r C_d \sqrt{|\calS|\cdot |\calA|}, 
    \end{align}
    where $\alpha_0, \nu, L_c, L_r, C_d$ are some constants and $|\calS|, |\calA|$ some measure of the state and action space. 
\end{theorem}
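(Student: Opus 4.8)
The plan is to treat the two convergence statements as consequences of the already-established $Q$-function convergence (Theorem~\ref{thm-irl-q-converge}), exploiting the two-timescale structure in which the inner AMDP solve tracks the slowly-varying reward parameter $\theta_k$.

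For the policy bound \eqref{eq-policy-converge}, I would start from the closed form of the actor step. The minimization $\pi_{k+1}(s)=\arg\min_{\pi}KL(\pi(s)\,\|\,\tfrac{1}{Z}\exp(\tilde{\mathcal{Q}}^{\pi_k}(s,a)))$ is solved exactly by the Gibbs distribution $\pi_{k+1}(\cdot|s)\propto\exp(\tilde{\mathcal{Q}}^{\pi_k}(s,\cdot))$, so that $\log\pi_{k+1}(s,\cdot)$ coincides with $\tilde{\mathcal{Q}}^{\pi_k}(s,\cdot)$ up to a state-dependent normalization constant. The optimal entropy-regularized policy $\pi_{\theta_k}$ for the frozen reward $\theta_k$ admits the analogous Gibbs representation in terms of the true optimal differential $Q$-function $\bar{Q}_{\theta_k}^{\pi_{\theta_k}}$. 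Subtracting, the normalization terms only shift each log-policy by a function of $s$; since both $\pi_{k+1}$ and $\pi_{\theta_k}$ are normalized, these state-dependent shifts are controlled and $\norm{\bbe_{\zeta_k}[\log\pi_{k+1}]-\log\pi_{\theta_k}}_\infty$ is bounded, up to constant factors, by $\spnorm{\bbe_{\zeta_k}[\stochQ{k}_{\theta_k}]-\bar{Q}_{\theta_k}^{\pi_{\theta_k}}}$. Averaging over $k$ and substituting the bound of Theorem~\ref{thm-irl-q-converge} reproduces the rate $\mathcal{O}(K^{-1})+\mathcal{O}(K^{-1/2})+\tfrac{\varsigma}{1-\gamma}$.

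For the reward-stationarity bound I would run a standard nonconvex descent argument on the dual objective $L$. First I would verify that $L$ is smooth: its gradient $\nabla L(\theta)=\bbe_{\pdist^E}[\nabla_\theta c]-\bbe_{\pdist^\pi}[\nabla_\theta c]$ is Lipschitz in $\theta$ because $\nabla_\theta c$ is bounded and Lipschitz (Assumption~\ref{assump:smooth-reward}) and, under ergodicity (Assumption~\ref{assump:ergodicity}), the induced stationary state-action distribution depends Lipschitz-continuously on $\theta$. Feeding the dual update $\theta_{k+1}=\theta_k-\alpha_k g_k$ into the descent inequality $L(\theta_{k+1})\le L(\theta_k)+\langle\nabla L(\theta_k),\theta_{k+1}-\theta_k\rangle+\tfrac{L_{\mathrm{smooth}}}{2}\norm{\theta_{k+1}-\theta_k}_2^2$, taking conditional expectations over $\zeta_k$, telescoping across $k$, and setting $\alpha=\alpha_0/\sqrt{K}$ produces the $\mathcal{O}(K^{-1})+\mathcal{O}(K^{-1/2})$ part of the rate after dividing by $K$; the variance of $g_k$ (governed by $\nu$) is absorbed into these terms.

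The crux — and the step I expect to be the main obstacle — is controlling the \emph{bias} of $g_k$. Since $g_k$ is computed from the current iterate policy $\pi_k$ rather than the exact optimizer $\pi_{\theta_k}$ of the frozen reward, the estimator is biased: writing $\bbe[g_k]=\nabla L(\theta_k)+b_k$, the bias $b_k=\bbe_{\pdist^{\pi_k}}[\nabla_\theta c]-\bbe_{\pdist^{\pi_{\theta_k}}}[\nabla_\theta c]$ measures the mismatch between the state-action occupancies of $\pi_k$ and $\pi_{\theta_k}$. Bounding $\norm{b_k}_2$ requires (i) relating this occupancy mismatch to the policy gap $\norm{\log\pi_k-\log\pi_{\theta_k}}_\infty$ via Lipschitz continuity of the stationary distribution under ergodicity, which yields the geometric factor $C_d\sqrt{|\calS|\cdot|\calA|}$ and the reward-gradient scale $L_r$, and (ii) feeding back the policy bound \eqref{eq-policy-converge}, hence ultimately the $Q$-error floor of Theorem~\ref{thm-irl-q-converge}. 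This bias does not vanish as $K\to\infty$; it is controlled only up to the critic's approximation floor, which is exactly the source of the irreducible term $\tfrac{1}{1-\gamma}2\nu L_c L_r C_d\sqrt{|\calS|\cdot|\calA|}$ in the final bound. Propagating the span-norm $Q$-error through the stationary-distribution-sensitivity estimate tightly, without picking up extra factors that inflate the constant, is the delicate part of the argument.
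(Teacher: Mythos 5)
Your proposal follows essentially the same route as the paper's own proof: the policy bound comes from the Gibbs form of both $\pi_{k+1}$ and $\pi_{\theta_k}$, so that the log-policy gap equals a $Q$-function gap modulo constant shifts (hence the span semi-norm) and Theorem~\ref{thm-irl-q-converge} finishes it, while the reward bound follows the standard smooth nonconvex descent inequality with the bias of $g_k$ controlled via the TV-Lipschitzness of the visitation distribution (Lemma~\ref{lemma:pi-from-q}), norm equivalence giving $C_d\sqrt{|\calS|\cdot|\calA|}$, and the critic floor supplying the irreducible $\nu$ term. Your identification of the bias of $g_k$ as the crux, and of its non-vanishing floor as the source of the constant term, matches the paper's argument exactly.
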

\textbf{Remark}{ First, the above theorem shows that the policy will converge to the optimal policy given the reward parameterization, although each iteration's reward parameter differs. Second, the reward parameter $\theta$ will converge to a stationary point of \eqref{D}, as the objective can be highly nonconvex under general function approximation. Third, for general AMDPs without $1$-step span contraction, we can use a $J$-step Bellman operator (see Appendix \ref{proof:thm-irl-q-converge}) for policy evaluation to maintain a ${\mathcal{O}}(\varepsilon^{-2})$ complexity for the entire algorithm.}
\section{Numerical experiments}
In this section, we showcase the performance of the proposed SPMD and IPMD algorithms. Our code can be found at \url{https://anonymous.4open.science/r/IPMD-9D60}. See more detail about all our implementation in the Appendix \ref{sec:experiment_detail}.

\textbf{MuJoCo Robotics Manipulation Tasks for RL}
This experiment tests our RL agent's performance on robotics manipulation tasks. Our SPMD algorithm is based on the \texttt{stable-baselines3} \cite{stable-baselines3}. We compare the performance of our algorithm with Soft Acrot-Critic (SAC) \cite{haarnoja2018soft} implemented in \cite{stable-baselines3}. 
Table \ref{table:mujoco_rl} reports the numerical results of each model. Our proposed SPMD achieves on-par performance with SAC and exceptionally better performance in the Humanoid environment. 

\textbf{MuJoCo benchmark for IRL} In this experiment, we compare the proposed IPMD method with IQ-Learn \citep{garg2021iq} and $f$-IRL \citep{ni2021f}. The authors of ML-IRL have not released its implementation at the time we experimented. Nevertheless, the performance of ML-IRL is comparable to $f$-IRL. It is observed in the literature that imitation learning algorithms have inferior performance \citep{zeng2022maximum}. Hence we omit these methods. Table \ref{table:mujoco_irl} reports the numerical results of each model.
Note that we cannot record a competitive result for IQ-Learn with Humanoid as claimed in the original paper, which we highlight using $^*$. 
The result shows that IPMD outperforms in a variety of environments.

\begin{table}
\begin{minipage}{.5\linewidth}
    \caption{\footnotesize{\bf MuJoCo Results for RL.} The average performance of RL agents across five runs.}
\centering
\begin{tabular}{ccc}
\hline
Task & SAC & SPMD (Ours)   \\
\hline
Hopper      &   $3876 \pm 78$    &   $3619 \pm 5   $\\
Half-Cheetah&   $13300 \pm 46$   &   $13025 \pm 37$   \\
Walker      &   $6115 \pm 41$    &   $4454 \pm 26$   \\
Ant         &   $5118 \pm 57$&    $5230 \pm 118$   \\
Humanoid    &   $6923 \pm 11$   &    $10249 \pm 7 $  \\
\hline
\end{tabular}

\label{table:mujoco_rl}
\end{minipage}
\begin{minipage}{.5\linewidth}
    \caption{\footnotesize{\bf MuJoCo Results for IRL.} The average performance of IRL agents across five runs.}
    \centering
\begin{tabular}{cccc}
\hline
IQL & $f$-IRL & IPMD (ours) & Expert   \\
\hline
$1909$  & $3083$    &   $\mathbf{3564}$  &$4115$ \\
$9132$ & $11259$    &   $\mathbf{12634}$ &$15467$  \\
$5155$ & $5378$    &   $\mathbf{5423}$  &$5323$ \\
$3486$  & $\mathbf{5460}$   &    $4053$  & $5783$\\
$661^*$  & $3580$   &    $\mathbf{7379}$   &$7137$\\
\hline
\end{tabular}
\label{table:mujoco_irl}
\end{minipage}
\end{table}

\begin{figure}[ht]
\centering
\vskip 0.2in
\begin{center}
\includegraphics[width=0.5\textwidth]{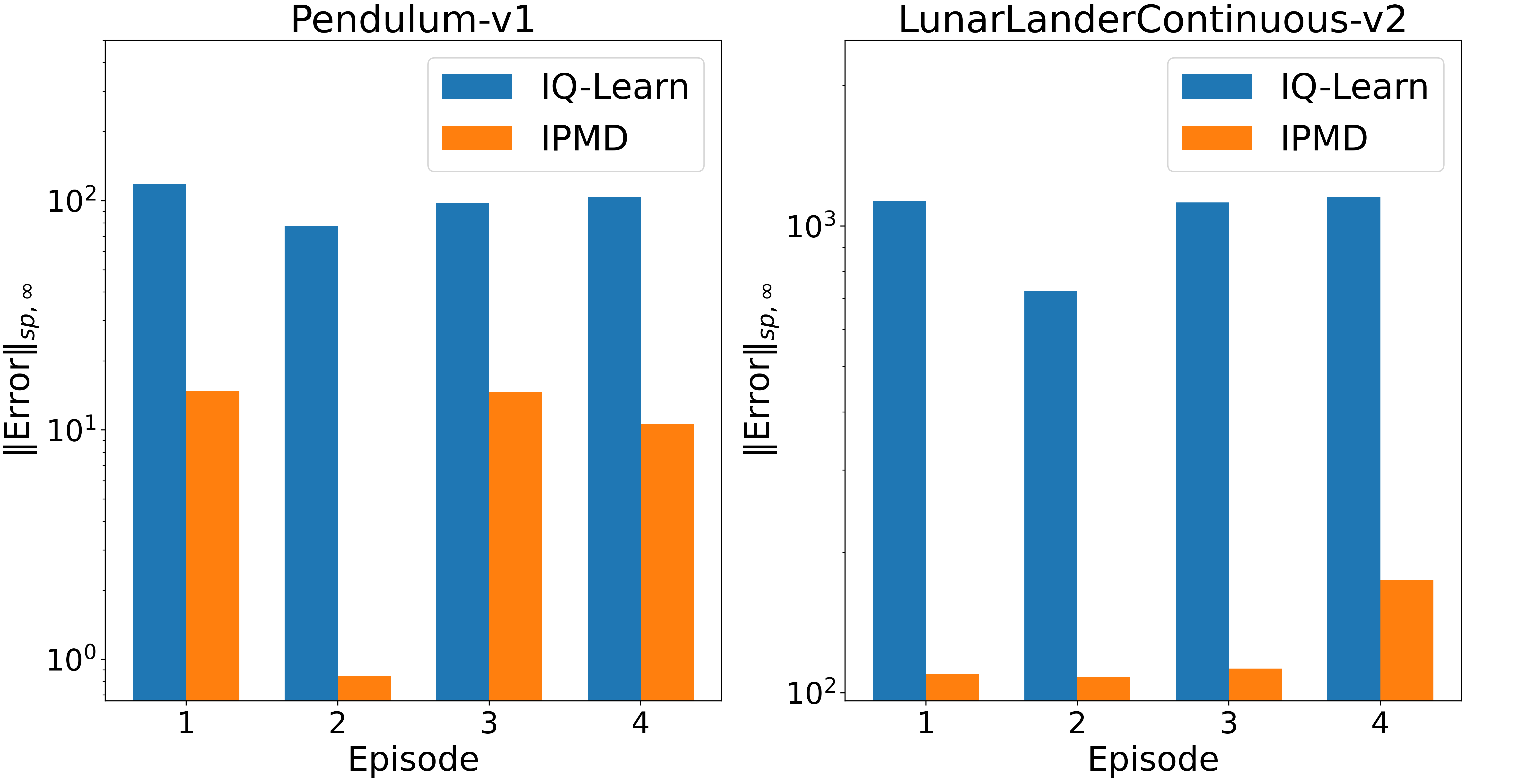}
\caption{Reward recovery performance of IPMD and IQ-Learn. The lower the bar, the better the performance.}
\label{fig:reward_recovery}
\end{center}
\vskip -0.2in
\end{figure}

\textbf{Reward Recovery} Finally, we compare the proposed IPMD method against IQ-Learn on recovering the expert's reward function in two environments: Pendulum and LunarLanderContinuous, both implemented in \cite{1606.01540}. The result of the experiment is shown in Figure \ref{fig:reward_recovery}. Using state-action pairs from 4 different episodes, we compare the span seminorm of the predicted reward and the true reward (the reward the expert uses). IQ-Learn's discount factor is set to $0.99$. The result shows IPMD's superiority in reward recovery. 

\section{Discussion}
In this paper, we formulate the Maximum Entropy IRL problem with the average-reward criterion and propose efficient algorithms to solve it. We devise the SPMD method for solving RL problems in general state and action spaces beyond the scope of linear function approximation with strong theoretical guarantees. We integrate this method to develop the IPMD method for solving the entire IRL problem and provide convergence analysis. To reach a $\varepsilon$ precision stationary point, IPMD requires $\mathcal{O}(\varepsilon^{-2})$ policy optimization steps. 

We also notice there are possible improvements from this work. For example, we impose various assumptions both the function approximation classes and strong reliance on the accuracy of the critic step. It is natural to seek more robust methods when facing inaccurate estimations. Furthermore, we only consider uniform ergodic chains. It is interesting to see how our method would hold under the unichain or multi-chain setting. Lastly, the analysis based on span contraction might be improved for general MDPs. This requires nontrivial work and thus we leave it for future development.

\bibliography{neurips_2023}
\bibliographystyle{plainnat}

\newpage
\appendix

\section{Appendix A}

\subsection{Proof of Lemma \ref{lemma:perm-diff}} \label{proof:lemma:perm-diff}

Assumption \ref{assump:ergodicity} implies that for any policy $\pi$, $\rho(\pi) = \bbe[c(s,a)+h^{\pi}(s) \mid a\sim \pi(\cdot|s), s\sim \nu^{\pi}]$.
\begin{align*}
    \rho^{\pi'}-\rho^\pi
    &=\lim_{T\rightarrow \infty} \tfrac{1}{T} \mathbb{E}\left[\tsum_{t=0}^{T-1} r(s_t,a_t)+h ^{\pi'(s_t)}(s_t) - \rho(\pi) + \bar{V}^{\pi}(s_{t+1}) - \bar{V}^{\pi}(s_t)  \right.
    \\
    &\qquad  \left. + h^{\pi(s_t)}(s_t) - h^{\pi(s_t)}(s_t) + \rho(\pi) \mid s_0=s, a_t\sim \pi'(\cdot|s_t), s_{t+1}\sim \mathsf{P}(\cdot|s_t, a_t)  \right] \\
    &\qquad
    + \lim_{T\rightarrow \infty} \tfrac{1}{T} \mathbb{E}[\bar V^{\pi}(s_0) \mid s_0=s] - \rho(\pi)\\
    &= \lim_{T\rightarrow \infty} \tfrac{1}{T} \mathbb{E} \left[\tsum_{t=0}^{T-1}\bar{Q}^{\pi}(s_t,a_t)-\bar{V}^{\pi}(s_t) \right.\\
    &\qquad \left. + h^{\pi'(s_t)}(s_t) - h^{\pi(s_t)}(s_t) \mid s_0=s, a_t\sim \pi'(\cdot|s_t), s_{t+1}\sim \mathsf{P}(\cdot|s_t, a_t) \right]\\
    &= \int \bar Q^{\pi}(s',\pi'(s')) - \bar V^{\pi}(s') + h^{\pi'(s)}(s') - h^{\pi(s')}(s') \kappa^{\pi'}(ds')
\end{align*}

\subsection{Optimality condition} \label{proof:optimality_cond}
The following lemma uses the optimality condition of \eqref{eq:exact_update}, serving an important recursion for the deterministic case. We denote convexity modulus of $\bar{Q}$ by $\mu_Q$ and define $\mu_d :=\mu_h - \mu_Q$.
\begin{lemma} \citep[Lemma 3]{lan2023policy} \label{lem:pmd_three_point}
If $\eta_k$ in \eqnok{eq:PMD_subproblem} satisfies
\beq \label{eq:PMDsteprule0}
\mu_d + \tfrac{1}{\eta_k} \ge 0,
\eeq
then for any $a \in \cA$,
\begin{align} \label{eq:PMD_three_point}
 &\psi^{\pi_k}(s, \pi_{k+1}(s))  + \tfrac{1}{\eta_k}D(\pi_k(s), \pi_{k+1}(s)) \nn \\
 &\qquad+ (\mu_d + \tfrac{1}{\eta_k}) D(\pi_{k+1}(s), a) \nn
 \\
 &\le  \psi^{\pi_k}(s, a) +  \tfrac{1}{\eta_k}D(\pi_k(s), a).
\end{align}
\end{lemma}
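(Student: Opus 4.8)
The plan is to treat this as a standard three-point (proximal) inequality for Bregman-type updates, derived from the first-order optimality condition of the actor step combined with the relative strong convexity of the subproblem objective and the Bregman three-point identity. Throughout, fix the state $s$ and abbreviate $x = \pi_k(s)$, $x^+ = \pi_{k+1}(s)$, and $F(a) := \bar{Q}^{\pi_k}(s,a) + h^a(s)$. By the definition \eqref{def:advantage} of the generalized advantage we have $F(a) = \psi^{\pi_k}(s,a) + \Delta_s$, where $\Delta_s := \bar{V}^{\pi_k}(s) + h^{\pi_k}(s)$ is a constant independent of $a$. In this notation the update \eqref{eq:exact_update} reads $x^+ = \arg\min_{a \in \calA} \{ F(a) + \tfrac{1}{\eta_k} D(x, a) \}$, and the step-size condition $\mu_d + \tfrac{1}{\eta_k} \geq 0$ guarantees that this subproblem is convex relative to $D$, so that $x^+$ is well defined and its optimality condition is both necessary and sufficient.

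First I would record the first-order optimality condition for the constrained minimizer $x^+$. Since $\bar{Q}^{\pi_k}$ is $\mu_Q$-weakly convex and $h$ is $\mu_h$-convex relative to $D$, the function $F$ is relatively $\mu_d$-strongly convex with $\mu_d = \mu_h - \mu_Q$; let $g \in \partial_a F(x^+)$ be a subgradient furnished by optimality. Optimality of $x^+$ over $\calA$ then gives the variational inequality
\[
\langle g + \tfrac{1}{\eta_k}\bigl(\nabla\omega(x^+) - \nabla\omega(x)\bigr),\, a - x^+\rangle \geq 0, \qquad \forall a \in \calA.
\]
Next I would invoke the Bregman three-point identity, which in the paper's convention $D(u,v) = \omega(v) - \omega(u) - \langle\nabla\omega(u), v - u\rangle$ reads
\[
\langle \nabla\omega(x^+) - \nabla\omega(x),\, a - x^+\rangle = D(x, a) - D(x, x^+) - D(x^+, a),
\]
and combine it with the relative strong convexity estimate $F(a) \geq F(x^+) + \langle g, a - x^+\rangle + \mu_d D(x^+, a)$.

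Substituting the variational inequality and the identity into this bound and rearranging produces
\[
F(x^+) + \tfrac{1}{\eta_k} D(x, x^+) + \bigl(\mu_d + \tfrac{1}{\eta_k}\bigr) D(x^+, a) \leq F(a) + \tfrac{1}{\eta_k} D(x, a),
\]
and finally replacing $F$ by $\psi^{\pi_k}(s,\cdot) + \Delta_s$ on both sides cancels the $a$-independent constant $\Delta_s$ and yields exactly \eqref{eq:PMD_three_point}. The main obstacle I anticipate is bookkeeping rather than conceptual: one must verify that the combined relative-strong-convexity modulus of $F$ is precisely $\mu_d = \mu_h - \mu_Q$ despite $\bar{Q}$ being only weakly convex, and check that the subgradient optimality argument remains valid in the general (possibly continuous) action space $\calA$. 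Because $\mu_d + \tfrac{1}{\eta_k} \geq 0$ makes the subproblem objective convex relative to $D$, the first-order condition is sufficient and no regularity beyond what is already assumed is needed, so the remaining work is the algebraic substitution above.
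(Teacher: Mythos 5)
The paper does not actually prove this lemma---it imports it by citation from \citet[Lemma 3]{lan2023policy}, so there is no internal proof to compare against. Your argument is correct and is exactly the standard three-point proximal inequality proof used in that cited source: first-order optimality of the (convex, by the step-size condition) subproblem, the Bregman three-point identity under the paper's reversed-argument convention $D(u,v)=\omega(v)-\omega(u)-\langle\nabla\omega(u),v-u\rangle$, and relative strong convexity of $F(a)=\bar{Q}^{\pi_k}(s,a)+h^a(s)$ with modulus $\mu_d=\mu_h-\mu_Q$, after which the $a$-independent constant $\bar{V}^{\pi_k}(s)+h^{\pi_k}(s)$ cancels to convert $F$ into $\psi^{\pi_k}(s,\cdot)$; the one genuine subtlety (existence and sufficiency of the first-order condition when $\bar{Q}^{\pi_k}$ is only weakly convex) is correctly neutralized by your observation that $\mu_d+\tfrac{1}{\eta_k}\ge 0$ makes the whole subproblem objective convex relative to $D$.
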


\subsection{Progress in each iteration} 
\begin{proposition} \label{prop:pmd_finite_recursion}
For any $s \in \calS$, we have
\begin{align}
\label{eq:function_decrease_PMD}
&\tfrac{1}{1-\Gamma}\left( \rho(\pi_{k+1}) - \rho(\pi_{k}) \right) \leq
\psi^{\pi_{k}}(s, \pi_{k+1}(s)) \nn \\
&\le -[\tfrac{1}{\eta_k}D(\pi_k(s), \pi_{k+1}(s)) + (\mu_d + \tfrac{1}{\eta_k}) D(\pi_{k+1}(s), \pi_k(s))]. 
\end{align}
\end{proposition}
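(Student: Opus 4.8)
The plan is to prove the two inequalities in \eqref{eq:function_decrease_PMD} separately, establishing the right-hand bound first since the left-hand one will rely on the sign information it produces. For the right-hand inequality, I would instantiate the three-point Lemma~\ref{lem:pmd_three_point} at the specific point $a = \pi_k(s)$. With this choice the right-hand side of \eqref{eq:PMD_three_point} collapses to zero: the Bregman term $\tfrac{1}{\eta_k} D(\pi_k(s), \pi_k(s))$ vanishes, and $\psi^{\pi_k}(s, \pi_k(s)) = \bar{Q}^{\pi_k}(s, \pi_k(s)) - \bar{V}^{\pi_k}(s) = 0$ by the Bellman relation \eqref{bellman_0_0} (the two $h$-terms cancel identically). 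Rearranging the remaining terms of \eqref{eq:PMD_three_point} then gives exactly
\[
\psi^{\pi_k}(s, \pi_{k+1}(s)) \le -\Bigl[\tfrac{1}{\eta_k} D(\pi_k(s), \pi_{k+1}(s)) + (\mu_d + \tfrac{1}{\eta_k}) D(\pi_{k+1}(s), \pi_k(s))\Bigr],
\]
which is the right-hand bound. This step requires the step-size rule \eqref{eq:PMDsteprule0}, i.e. $\mu_d + \tfrac{1}{\eta_k} \ge 0$, to apply the lemma.

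The crucial by-product I would record explicitly is a sign fact: since both Bregman distances are nonnegative, $\tfrac{1}{\eta_k} > 0$, and $\mu_d + \tfrac{1}{\eta_k} \ge 0$, the right-hand side above is nonpositive, whence $\psi^{\pi_k}(s', \pi_{k+1}(s')) \le 0$ for \emph{every} state $s' \in \calS$. This uniform nonpositivity is what drives the left-hand inequality.

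For the left-hand inequality, I would apply the performance-difference Lemma~\ref{lemma:perm-diff} with $\pi' = \pi_{k+1}$ and $\pi = \pi_k$ to write
\[
\rho(\pi_{k+1}) - \rho(\pi_k) = \int \psi^{\pi_k}(s', \pi_{k+1}(s'))\, \kappa^{\pi_{k+1}}(ds').
\]
Because every integrand value is nonpositive, discarding all contributions except the one at the fixed state $s$ only increases the integral, so $\rho(\pi_{k+1}) - \rho(\pi_k) \le \psi^{\pi_k}(s, \pi_{k+1}(s))\, \kappa^{\pi_{k+1}}(s)$. Then, since $\psi^{\pi_k}(s, \pi_{k+1}(s)) \le 0$ and $\kappa^{\pi_{k+1}}(s) \ge 1 - \Gamma$ by Assumption~\ref{assump:ergodicity}, replacing the weight $\kappa^{\pi_{k+1}}(s)$ by the smaller value $1-\Gamma$ increases the nonpositive product, giving $\rho(\pi_{k+1}) - \rho(\pi_k) \le (1-\Gamma)\,\psi^{\pi_k}(s, \pi_{k+1}(s))$. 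Dividing through by $1 - \Gamma > 0$ yields the claimed left-hand bound.

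The main obstacle is the passage from the $\kappa^{\pi_{k+1}}$-averaged quantity to a bound holding at an arbitrary individual state $s$. The sign fact from the first part together with the uniform lower bound $\kappa^{\pi_{k+1}} \ge 1-\Gamma$ makes this clean in the discrete-mass reading of Assumption~\ref{assump:ergodicity}. For genuinely continuous state spaces the ``discard all but $s$'' step cannot literally restrict to a measure-zero point and must instead be phrased through a neighborhood/limiting argument (or interpreted via the density lower bound), which is precisely where the ergodicity hypothesis carries the weight; I would flag this as the subtle point requiring the most care.
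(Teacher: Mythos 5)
Your proposal is correct and takes essentially the same route as the paper's own proof: instantiate the three-point lemma (Lemma~\ref{lem:pmd_three_point}) at $a = \pi_k(s)$ to get the right-hand inequality and the nonpositivity of $\psi^{\pi_k}(\cdot,\pi_{k+1}(\cdot))$, then combine the performance-difference lemma with $\kappa^{\pi_{k+1}}(\{s\}) \ge 1-\Gamma$ and the sign fact to obtain the left-hand inequality. Your remark about the continuous-state reading of the mass lower bound is a fair caveat that the paper itself glosses over, but it does not change the argument.
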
 
The above proposition shows that the progress of each step relates to the advantage function. 
\begin{proof}
By the above lemma with $a = \pi_k(s)$, we have
\begin{align}
\label{eq:PMD_bnd_Adv1}
&\psi^{\pi_k}(s, \pi_{k+1}(s))  + \tfrac{1}{\eta_k}D(\pi_k(s), \pi_{k+1}(s)) + (\mu_d + \tfrac{1}{\eta_k}) D(\pi_{k+1}(s), \pi_k(s)) \nn\\
& \leq  \psi^{\pi_k}(s, \pi_k(s)) +  \tfrac{1}{\eta_k}D(\pi_k(s), \pi_k(s)) = 0,
\end{align}
where the last identity follows from the fact that $\psi^{\pi_k}(s, \pi_k(s)) = 0$ due to Assumption \eqnok{bellman_0_0} and \eqref{def:advantage}.
By Lemma~\ref{lemma:perm-diff}, Eq.~\ref{eq:PMD_bnd_Adv1} 
and the fact that $\kappa_s^{\pi_{k+1}} (\{s\}) \ge 1-\Gamma$ due to \eqref{eq:state-visitation},
we have 
\begin{align}
\rho(\pi_{k+1}) - \rho(\pi_{k})
&= \int \psi^{\pi_{k}}(q, \pi_{k+1}(q)) \kappa_s^{\pi_{k+1}} (d q)  \nn\\
&\leq \psi^{\pi_{k}}(s, \pi_{k+1}(s)) \, \kappa_s^{\pi_{k+1}} (\{s\})  \nn \\
&\leq (1-\Gamma) \psi^{\pi_{k}}(s, \pi_{k+1}(s)). \label{eq:PMD_bnd_Adv2}
\end{align}
The result then follows by combining the above two inequalities. 
\end{proof}

\newpage
\subsection{Proof of Theorem \ref{pmd_convex}}
\label{proof:pmd_convex}
\begin{proof}
Let $\mathfrak{L}^{\pi_k}(s,a) \coloneqq \bar{Q}^{\pi_k}(s,a) + \tfrac{1}{\eta_k} \chevron{\nabla \omega(\pi_k(s))}{a}$ and $\mathcal{L}(s,a;\theta_k)$ be its stochastic estimator. Denote $\delta(s,a) := \psi_k(s,a) - \mathfrak{L}_k(s, a)$.
By the optimality condition of \eqnok{eq:PMD_subproblem}, we have
\begin{align}
&\mathcal L(s, \pi_{k+1}(s);\theta_k) - \mathcal L(s, a; \theta_k) + h^{\pi_{k+1}(s)}(s) - h^a(s) \nn\\
&+ 
\tfrac{1}{\eta_k}[\omega(\pi_{k+1}(s)) - \omega(a)] + (\mu + \tfrac{1}{\eta_k})D(\pi_{k+1}(s), a) \le 0, 
\end{align}
which implies that
\begin{align}
&\mathfrak L_k(s, \pi_{k+1}(s)) - \mathfrak L_k(s,a) +  [h^{\pi_{k+1}(s)}(s) - h^a(s)] + \tfrac{1}{\eta_k} [\omega(\pi_{k+1}(s)) - \omega(a)] \nn\\
&\quad + (\mu + \tfrac{1}{\eta_k})D(\pi_{k+1}(s), a) +\delta_k^Q(s, \pi_{k+1}(s)) - \delta_k^Q(s, a) + \langle \delta_k^\omega(s), \pi_{k+1}(s) - a \rangle \le 0.
\end{align}
In view of the definition \eqref{def:advantage}, we can show that
\begin{align}
&\psi^{\pi_k}(s, \pi_{k+1}(s)) - \psi^{\pi_k}(s,a) + h^{\pi_{k+1}(s)}(s) - h^{\pi_k(s)}(s) + \tfrac{1}{\eta_k} [D(\pi_{k}(s), \pi_{k+1}(s)) - D(\pi_k(s), a) ]\nn\\
& \quad +  (\tilde \mu_d + \tfrac{1} {\eta_k})D(\pi_{k+1}(s), a) +\delta_k^Q(s, \pi_{k+1}(s)) - \delta_k^Q(s, a) + \langle \delta_k^\omega(s), \pi_{k+1}(s) - a \rangle \le 0,\label{eq:pmd_continuous_state_recursion}
\end{align}
using the following inequalities and the definition of Bregman distance
\begin{align}
\bar{Q}^{\pi_k}(s, \pi_{k+1}(s)) - \bar{Q}^{\pi_k}(s, \pi_{k}(s)) &\ge - M_{\bar Q}\|\pi_{k+1}(s)-\pi_{k}(s)\|, \label{eq:PMD_continuous_bnd_error1}\\
 h^{\pi_{k+1}(s)}(s) - h^{\pi_k(s)}(s) &\ge -M_h \|\pi_{k+1}(s)-\pi_{k}(s)\|,  \label{eq:PMD_continuous_bnd_error2}\\
 \delta_k^Q(s, \pi_{k+1}(s)) & =  \delta_k^Q(s, \pi_{k+1}(s)) - \delta_k^Q(s, \pi_{k}(s)) + \delta_k^Q(s, \pi_{k}(s)) \nn\\
 &\ge -(M_{\mathcal{\bar Q}}+M_{\bar{Q}}\|\pi_{k+1}(s)-\pi_{k}(s)\| + \delta_k^Q(s, \pi_{k}(s)),  \label{eq:PMD_continuous_bnd_error3}\\
 \langle \delta_k^\omega(s), \pi_{k+1}(s) - a \rangle
 &=  \langle \delta_k^{\omega,det}(s), \pi_{k+1}(s) - a \rangle + \langle \delta_k^{\omega,sto}(s), \pi_k(s) - a\rangle \nn\\
 &\qquad 
 + \langle \delta_k^{\omega,sto}(s), \pi_{k+1}(s) - \pi_k(s)\rangle \nn\\
 &\ge - \|\delta_k^{\omega,det}(s)\|_* \bar D_\cA  - \|\delta_k^{\omega,sto}(s)\|_* \|\pi_{k+1}(s) - \pi_k(s)\| \nn \\
 &\qquad + \langle \delta_k^{\omega,sto}(s), \pi_k(s)- a\rangle,  \label{eq:PMD_continuous_bnd_error4}
\end{align}
where the first three inequalities are from the definition of Lipschitz continuity, and the last inequality is from Cauchy-Schwartz inequality. Notice the approximation errors are all captured by the error terms. We then conclude from the above inequality and
Young's inequality that
\begin{align} 
&- \psi^{\pi_k}(s,a)  +  (\mu  + \tfrac{1}{\eta_k})D(\pi_{k+1}(s), a) \nn\\
&\leq \tfrac{1}{\eta_k}  D(\pi_k(s), a) 
+ \tfrac{\eta_k}{2} (2M_Q + M_{\tilde Q} + M_h +  \|\delta_k^{\omega,sto}(s)\|_*)^2\nn \\
& \quad  -\delta_k^Q(s, \pi_{k}(s)) +\delta_k^Q(s, a)+ \|\delta_k^{\omega,det}(s)\|_* \bar D_\cA\nn  \\
&\qquad -\langle \delta_k^{\omega, sto}(s), \pi_k(s)- a\rangle.
\end{align}
Note that setting $a=\pi^*$ and taking conditional expectation of the above inequality w.r.t. $\xi_k$, 
it then follows from  Lemma~\ref{prop:pmd_finite_recursion}, Equation~\ref{eq:weak_ass_pmd_convex3}
and $\bbe_{\xi_k}[\delta_k^{sto}(s)] = 0$ that
\begin{align}
&\tfrac{1}{1-\Gamma} (\rho(\pi_k) - \rho^*) +  (\mu+ \tfrac{1}{\eta_k})\bbe_{\xi_k}[\bbe_{s \sim \nu^*}D(\pi_{k+1}(s), \pi^*(s))] \nn\\
&\le \tfrac{1}{\eta_k} \bbe_{s \sim \nu^*}D(\pi_{k}(s), \pi^*(s)) 
+\tfrac{\eta_k}{2} [(M_{\mathcal{\bar Q}}+M_{\bar{Q}} + M_h)^2 + (\sigma^{\omega})^2)  \nn
\\
&\qquad +\bbe_{\xi_k}\left\{\bbe_{s \sim \nu^*}\left[|\delta_{k}(s, \pi_{k}(s))| + |\delta_{k}(s, \pi_*(s))|\right]\right\}
+\|\delta_k^{\omega,det}(s)\|_* \bar D_\cA.
\end{align}
Note that the term $D(\pi_{k}(s), \pi^*)$ follows a telescopic sequence with the step size choice \eqref{eq:pmd_continuous_step}.
Taking full expectation w.r.t. $\xi_k$ and the $\beta_k$-weighted sum of the above inequalities,
it then follows from \eqnok{eq:pmd_continuous_step} and the definition of $\rho$ that
\begin{align}
&\tfrac{1}{1-\Gamma} \tsum_{k=0}^{K-1} \beta_t\bbe [(\rho(\pi_k) - \rho^*)] + \beta_{K-1} (\mu+ \tfrac{1}{\eta_{K-1}})  \bbe_{\xi_k, s \sim \nu^*}D(\pi_{k}(s), \pi^*(s))\nn \\
&\leq  \tfrac{\beta_0}{\eta_0} \bbe_{s \sim \nu^*}D(\pi_{0}(s), \pi^*(s)) 
+ \tsum_{k=0}^{K-1} \beta_k \eta_k [(2M_{\mathcal{\bar Q}}+M_{\bar{Q}} + M_h)^2/2 + (\sigma^{\omega})^2) 
\nn\\&+(\varsigma^{\bar Q} + \varsigma^\omega \bar D_\calA)(\tsum_{k=0}^{K-1} \beta_k),
\end{align}
from which we get the desired result by dividing both sides $\tsum_{k=0}^{K-1}\beta_t$.
\end{proof}
\newpage
\subsection{Proof of Theorem \ref{the:pmd_nonconvex}}
\label{proof:the:pmd_nonconvex}
\begin{proof}
Setting $a = \pi_k(s)$ in \eqnok{eq:pmd_continuous_state_recursion}, and using the facts that $\psi^{\pi_k}(s,\pi_k(s)) = 0$ and $D(\pi_k(s), \pi_k(s))=0$, we obtain
\begin{align}
&\psi^{\pi_k}(s, \pi_{k+1}(s))  + h^{\pi_{k+1}(s)}(s) - h^{\pi_k}(s) + \tfrac{1}{\eta_k} D(\pi_{k}(s), \pi_{k+1}(s)) \nn\\
& \quad +  (\tilde \mu_d + \tfrac{1} {\eta_k})D(\pi_{k+1}(s), \pi_k(s))  +\delta_k^Q(s, \pi_{k+1}(s)) \nn \\
&\quad - \delta_k^Q(s, a) + \langle \delta_k^\omega(s), \pi_{k+1}(s) - a \rangle \leq 0 .
\end{align}
Using the above inequality, \eqnok{eq:PMD_continuous_bnd_error2}-\eqnok{eq:PMD_continuous_bnd_error4},
and Young's inequality, we then have
\begin{align}
&\psi^{\pi_k}(s, \pi_{k+1}(s))  + \tfrac{1}{2\eta_k} D(\pi_{k}(s), \pi_{k+1}(s)) +  (\tilde \mu_d + \tfrac{1} {\eta_k})D(\pi_{k+1}(s), \pi_k(s))  \nn\\
&\le (M_h + M_{\mathcal{\bar Q}}+M_{\bar{Q}} + \|\delta_k^{\omega,sto}(s)\|_*) \|\pi_{k+1}(s) - \pi_k(s)\| - \tfrac{1}{2\eta_k} D(\pi_{k}(s), \pi_{k+1}(s))\nn\\
&\quad + \|\delta_k^{\omega,det}(s)\|_* \bar D_\cA \nn\\
 &\le \eta_k (M_h + M_{\mathcal{\bar Q}}+M_{\bar{Q}} + \|\delta_k^{\omega,sto}(s)\|_*)^2  + \|\delta_k^{\omega,det}(s)\|_* \bar D_\cA.\nn\\
 &\le \eta_k (M_h + M_{\mathcal{\bar Q}}+M_{\bar{Q}} +  \bar \sigma^{\omega})^2  + \bar \varsigma^\omega \bar D_\cA. \label{eq:strong_ass_pmd_nonconvex3}
\end{align}

Similar to Eq.~\ref{eq:PMD_bnd_Adv1}, we can show that
\begin{align}
\rho(\pi_{k+1}) - \rho(\pi_k)
&= \int \psi^{\pi_k}(q, \pi_{k+1}(q))]  \kappa_s^{\pi_{k+1}}(dq) \nn\\
&= \int \psi^{\pi_k}(q, \pi_{k+1}(q)) -
\left[\eta_k (M_h + M_{\mathcal{\bar Q}}+M_{\bar{Q}} +\bar \sigma^{\omega})^2  +  \bar \varsigma^\omega \bar D_\cA\right] \kappa_s^{\pi_{k+1}}(dq) \nn\\
&\quad + \eta_k (M_h + M_{\mathcal{\bar Q}}+M_{\bar{Q}} +\bar \sigma^{\omega})^2  + \bar \varsigma^\omega \bar D_\cA \nn\\
&\le (1-\Gamma )\psi^{\pi_k}(s, \pi_{k+1}(s)) +
\Gamma \left[\eta_k (M_h + M_{\mathcal{\bar Q}}+M_{\bar{Q}} +\bar \sigma^{\omega})^2  +  \bar \varsigma^\omega \bar D_\cA\right] \label{eq:strong_ass_pmd_nonconvex4}
\end{align}
The result in \eqnok{eq:stationary_SPMD0} follows by taking the telescopic sum of the above inequality,
while the one in \eqnok{eq:stationary_SPMD} follows directly from \eqnok{eq:stationary_SPMD0} and the first inequality we proved in this subsection \eqnok{eq:strong_ass_pmd_nonconvex3}.
\end{proof}

\newpage
\subsection{Proof of an Auxiliary Lemma}
\label{proof:auxiliary_lemmas}
To start the analysis, we first need a lemma to bound the total variation norm of policies with respect to its parameterization. This lemma will be used later to bound the convergence of the differential Q-function. Notice that for ergodic chains. we can obtain the following lemma by replacing $\Tilde{\mathsf{P}}$ with $\mathsf{P}$ in the original proof.
\begin{lemma} \label{lemma:pi-from-q}
    \citep[Lemma 3][]{xu2020improving} Consider the initialization distribution $\eta(\cdot)$ and transition kernel $\mathcal{P}( \cdot | s,a)$. Under $\eta(\cdot)$ and $\mathcal{P}( \cdot | s,a)$, denote $d_w(\cdot, \cdot)$ as the state-action visitation distribution of the MDP with the Boltzmann policy parameterized by the parameter $w$, i.e., $\pi \propto \exp{w}$. Suppose Assumption \ref{assump:ergodicity} holds, for all policy parameter $w$ and $w^\prime$, we have 
	\begin{align}
		\| d_{w}(\cdot, \cdot) - d_{w^\prime}(\cdot, \cdot)  \|_{TV} \leq C_d \| w - w^\prime \|_2 	\label{ineq:lipschitz_measure}
	\end{align}
	where $C_d$ is a positive constant.
\end{lemma}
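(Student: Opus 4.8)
The plan is to follow the strategy of \citep{xu2020improving}, adapting it to the present ergodic average-reward setting by working directly with the transition kernel $\mathsf{P}$ in place of the modified kernel $\widetilde{\mathsf{P}}$ used there. Write the stationary state-action visitation distribution as the product $d_w(s,a) = \kappa^{\pi_w}(s)\,\pi_w(a\mid s)$, where $\pi_w \propto \exp(w)$ is the Boltzmann policy and $\kappa^{\pi_w}$ is the unique stationary state distribution guaranteed by Assumption \ref{assump:ergodicity}. The first step is the algebraic decomposition
\begin{align}
d_w(s,a) - d_{w'}(s,a) = \kappa^{\pi_w}(s)\,[\pi_w(a\mid s) - \pi_{w'}(a\mid s)] + [\kappa^{\pi_w}(s) - \kappa^{\pi_{w'}}(s)]\,\pi_{w'}(a\mid s),
\end{align}
so that integrating the absolute values and using $\int \pi_{w'}(a\mid s)\,da = 1$ bounds $\|d_w - d_{w'}\|_{TV}$ by a policy-difference term plus a stationary-distribution-difference term.

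Second, I would bound the policy-difference term. Since $\pi_w(\cdot\mid s)$ is the softmax of the logits $w$, its Jacobian with respect to $w$ has uniformly bounded entries, so the map $w \mapsto \pi_w(\cdot\mid s)$ is Lipschitz in total variation: $\|\pi_w(\cdot\mid s) - \pi_{w'}(\cdot\mid s)\|_{TV} \leq L_\pi \|w - w'\|_2$ with $L_\pi$ independent of $s$. Because $\kappa^{\pi_w}$ is a probability measure, integrating the first term of the decomposition against it contributes at most $L_\pi\|w-w'\|_2$.

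Third, and this is where the main difficulty lies, I would control the stationary-distribution-difference term. The induced state-to-state kernel $P_w(s'\mid s) = \int \pi_w(a\mid s)\,\mathsf{P}(s'\mid s,a)\,da$ inherits the Lipschitz modulus of the policy, giving $\|P_w(\cdot\mid s) - P_{w'}(\cdot\mid s)\|_{TV} \leq L_\pi\|w-w'\|_2$. Under Assumption \ref{assump:ergodicity} the chain is (Harris) uniformly ergodic, hence geometrically mixing at a rate uniform over feasible policies, and the standard perturbation bound for stationary distributions of uniformly ergodic chains yields $\|\kappa^{\pi_w} - \kappa^{\pi_{w'}}\|_{TV} \leq C_\kappa \sup_s \|P_w(\cdot\mid s) - P_{w'}(\cdot\mid s)\|_{TV} \leq C_\kappa L_\pi \|w-w'\|_2$. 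The delicate point is that in general (continuous) state spaces this perturbation estimate cannot rely on finite-dimensional matrix analysis but must be justified through the drift-and-minorization machinery underlying Harris ergodicity; this is exactly where the substitution of $\mathsf{P}$ for $\widetilde{\mathsf{P}}$ enters, since the minorization/mixing constant of $\mathsf{P}$ is what determines $C_\kappa$.

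Finally, combining the two estimates via the triangle inequality gives
\begin{align}
\|d_w - d_{w'}\|_{TV} \leq L_\pi\|w-w'\|_2 + C_\kappa L_\pi\|w-w'\|_2,
\end{align}
so setting $C_d := L_\pi(1 + C_\kappa)$ establishes the claimed Lipschitz bound \eqref{ineq:lipschitz_measure}. I expect the mixing-based perturbation step to be the crux; the two Lipschitz computations for the softmax and the induced kernel are routine once the ergodicity constant is available.
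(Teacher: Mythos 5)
The paper never actually proves this lemma: it imports it verbatim from \citep[Lemma 3][]{xu2020improving}, with only the one-line remark that for ergodic chains one may replace the modified kernel $\widetilde{\mathsf{P}}$ by $\mathsf{P}$ in the original proof. Your argument is a correct reconstruction of exactly that imported proof --- the product decomposition of $d_w$ into a policy-difference term plus a stationary-distribution term, the softmax Lipschitz bound, and the uniform-ergodicity perturbation bound for stationary distributions --- and your observation that the kernel substitution matters precisely in the mixing constant $C_\kappa$ is the same point the paper's remark gestures at, so this is the same approach, simply carried out in more detail than the paper provides.
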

In the next step, we show that the gradient of the differential $Q$-function and the gradient of the dual objective is also bounded and Lipschitz continuous. For discrete state and action spaces, we adopt standard stochastic matrix theory. However, we need to make the following assumption for continuous state and action spaces.
\begin{lemma} \label{lemma:irl-graidnet-smooth}
    For any $s\in \calS, a\in \calA$ the differential Q-function and the gradient of the objective in \ref{D} is smooth, i.e.,
    \begin{align}
            | \Q^{\pi_{\theta_1}}(s,a) - \Q^{\pi_{\theta_2}}(s,a) | & \leq L_q \|\theta_1 - \theta_2\|_{2},  \\
            \|\nabla_\theta\llh(\theta_1) - \nabla_\theta \llh(\theta_2)\|_{2} & \leq L_c \|\theta_1 - \theta_2\|_{2}, 
    \end{align} 
    for some constant real number $L_q$ and $L_c$.
\end{lemma}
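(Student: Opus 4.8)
The plan is to prove the two bounds separately. For continuous $\calS$ or $\calA$ I would rely on the uniform gradient bound of Assumption~\ref{assump:q-gradient-bounded} and the distributional stability of Lemma~\ref{lemma:pi-from-q}, and I would note that the discrete case follows from standard stochastic-matrix perturbation arguments for the stationary distribution. For the first inequality, I would view $\Q^{\pi_\theta}(s,a)$ as a function of $\theta$ alone, with the implicit dependence through the induced optimal policy absorbed into the total derivative, and apply the fundamental theorem of calculus along the segment $\theta(t)=\theta_2+t(\theta_1-\theta_2)$, $t\in[0,1]$:
\begin{equation}
\Q^{\pi_{\theta_1}}(s,a)-\Q^{\pi_{\theta_2}}(s,a) = \int_0^1 \langle \nabla_\theta \Q^{\pi}(s,a;\theta(t)),\, \theta_1-\theta_2\rangle \, dt.
\end{equation}
Cauchy--Schwarz together with $\|\nabla_\theta \Q^\pi(s,a;\theta)\|_2\le L_q$ then gives $|\Q^{\pi_{\theta_1}}(s,a)-\Q^{\pi_{\theta_2}}(s,a)|\le L_q\|\theta_1-\theta_2\|_2$, which is the stated constant directly.

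For the second inequality I would start from the dual gradient
\begin{equation}
\nabla L(\theta) = \bbe_{(s,a)\sim d^E}[\nabla_\theta c(s,a;\theta)] - \bbe_{(s,a)\sim d^{\pi_\theta}}[\nabla_\theta c(s,a;\theta)],
\end{equation}
and bound $\nabla L(\theta_1)-\nabla L(\theta_2)$ term by term. The expert term is immediate: since $d^E$ is independent of $\theta$, the triangle inequality for integrals and the Lipschitz-gradient bound $\|\nabla c(s,a;\theta_1)-\nabla c(s,a;\theta_2)\|_2\le L_g\|\theta_1-\theta_2\|_2$ of Assumption~\ref{assump:smooth-reward} control it by $L_g\|\theta_1-\theta_2\|_2$. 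The policy term is the crux, since both the integrand and the measure $d^{\pi_\theta}$ vary with $\theta$. I would add and subtract $\bbe_{(s,a)\sim d^{\pi_{\theta_2}}}[\nabla_\theta c(s,a;\theta_1)]$ to split it into a fixed-measure, moving-integrand piece, again bounded by $L_g\|\theta_1-\theta_2\|_2$, and a fixed-integrand, moving-measure piece $\int \nabla_\theta c(s,a;\theta_1)\,(d^{\pi_{\theta_1}}-d^{\pi_{\theta_2}})$.

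For this last piece I would use $\|\nabla_\theta c\|_2\le L_r$ and pair it with Lemma~\ref{lemma:pi-from-q}. Because the entropy-regularized optimal policy $\pi_\theta$ is Boltzmann in its $Q$-function, its parameter $w_\theta$ equals $\Q^{\pi_\theta}_\theta$ up to a state-dependent normalization, so its variation in $\theta$ is controlled by the first inequality: $\|w_{\theta_1}-w_{\theta_2}\|_2 \lesssim L_q\|\theta_1-\theta_2\|_2$. Lemma~\ref{lemma:pi-from-q} then yields $\|d^{\pi_{\theta_1}}-d^{\pi_{\theta_2}}\|_{TV}\le C_d L_q\|\theta_1-\theta_2\|_2$, and integrating a vector field of norm at most $L_r$ against this signed measure bounds the piece by $L_r C_d L_q\|\theta_1-\theta_2\|_2$ up to the total-variation normalization. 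Summing the three contributions gives the claim with $L_c$ a fixed linear combination of $L_g$ and $L_r C_d L_q$. I expect the main obstacle to be making rigorous the implicit map $\theta\mapsto w_\theta$ through the closed-form optimal policy, that is, showing it is genuinely Lipschitz rather than only continuous, and, in the discrete setting, carefully tracking the dimension-dependent constants (such as the $\sqrt{|\calS|\,|\calA|}$ factor that later appears in Theorem~\ref{thm:irl_policy_reward_converge}) that arise when converting total-variation and span-seminorm control into $\ell_2$ bounds.
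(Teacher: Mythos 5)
Your proposal follows the paper's proof essentially step for step: the first inequality is obtained exactly as in the paper (uniform gradient bound on $\nabla_\theta \Q$ --- via Assumption~\ref{assump:q-gradient-bounded} for continuous spaces and the Puterman-style stochastic-matrix computation for the discrete case --- combined with the mean value theorem, which your FTC-plus-Cauchy--Schwarz argument reproduces), and the second inequality uses the same expert/policy decomposition, the same add-and-subtract splitting into a moving-integrand piece and a moving-measure piece, H\"older's inequality, and Lemma~\ref{lemma:pi-from-q} applied to the Boltzmann parameterization of $\pi_\theta$ by its differential $Q$-function. The $\sqrt{|\calS|\cdot|\calA|}$ norm-equivalence factor you flag as an obstacle is handled in the paper exactly as you anticipate and is absorbed into $L_c \coloneqq 2L_g + 2L_qL_rC_d\sqrt{|\calS|\cdot|\calA|}$, so there is no gap.
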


\begin{proof}
    Note that from \eqref{bellman_0} and Appendix A from \cite{puterman2014markov} we have for any given policy $\pi$,
    \begin{equation}
        \nabla_\theta \Q = \nabla_\theta c - \nabla_\theta \rho + P^\pi (I-P^\pi+P^*)^{-1}(I-P^*) \nabla_{\theta} c
    \end{equation}
    where $P^*$ is the limiting matrix of the transition kernel $P^\pi$, i.e., $P = \lim_{N\goinf} \tfrac{1}{N} \tsum_{n=1}^N P^n$, and $P^n$ is the n-th power of the matrix $P$. Since the chain is uniformly ergodic (ref to ergodicity) we have $P^* = e \kappa^T $, i.e., 
    \begin{equation}
        P^* = \begin{bmatrix}
     - & \kappa^T & -\\
     - & \kappa^T & -\\
     & \vdots &
    \end{bmatrix}.
    \end{equation}
    Taking the gradient of both sides, we have 
    \begin{equation}
        \nabla_\theta \Q = \nabla_\theta c - \nabla_\theta \rho + P^\pi (I-P^\pi+P^*)^{-1}(I-P^*) \nabla_\theta c.
    \end{equation}
    To bound the last term, we first note that $\|P^\pi\|_{2} = \|I-P^*\|_{2} =1$. Furthermore, we have for ergodic Markov chains, $\lambda_{\min}(I-P+P^*)>0$ and the following decomposition (Theorem A.5, \cite{puterman2014markov})
    \begin{equation}
        I-P^\pi + P^* = W^{-1} \begin{bmatrix}
        I-Q & 0\\
        0 & I
    \end{bmatrix} W
    \end{equation}
    Denote $1/\beta \coloneqq \lambda_{\min} (I-Q)$, we have for any $s\in \calS,a\in \calA$
    \begin{equation}
        \begin{aligned}
        \| \nabla_\theta \Q\|_{2} &= \left\|\nabla_\theta c - \nabla_\theta \rho + P^\pi (I-P^\pi+P^*)^{-1}(I-P^*) \nabla_\theta c \right\|_{2} \\
        &\leq \|\nabla_\theta c\|_{2} + \|\nabla_\theta \rho\|_{2} + \left\| P^\pi (I-P^\pi+P^*)^{-1}(I-P^*) \nabla_\theta c\right\|_{2}\\
        &\leq 2 \|\nabla_\theta c\|_{2} + \left\| P^\pi (I-P^\pi+P^*)^{-1}(I-P^*) \right\|_{2} \|\nabla_\theta c \|_{2}\\
        &\leq 2 \|\nabla_\theta c\|_{2} + \| P^\pi \|_{2}  \|(I-P^\pi+P^*)^{-1} \|_{2}  \|(I-P^*) \|_{2} \|\nabla_\theta c \|_{2}\\
        &\leq 2 \|\nabla_\theta c\|_{2} + \beta \|\nabla_\theta c \|_{2}\\
        &\leq (2 + \beta )L_r
    \end{aligned}
    \end{equation}
    where the last step follows Assumption \ref{assump:smooth-reward}.
    Using the mean value theorem, we get
    \begin{align}
        \left| \Q^{\pi_{\theta_1}}(s,a;\theta_1) - \Q^{\pi_{\theta_2}}(s,a;\theta_2)\right| &\leq \max_{\theta}\| \nabla_\theta \Q^{\pi_{\theta}}(s,a;\theta)\| \|\theta_1 - \theta_2\|\nn \\
        &\leq (2+\beta) L_r \|\theta_1 - \theta_2\|
    \end{align}
    Denoting $L_q=2+\beta$ and taking the minimum over all state-action pairs, we get the desired result for discrete state and action. For general state and action spaces, in view of Assumption~\ref{assump:q-gradient-bounded}, it is easy to verify the result. Note that from the above analysis, the boundedness of the gradient under general state and action spaces is not too restrictive.
    
    For the second part, note that
    \begin{align}
\nabla_\theta L(\theta_1) - \nabla_\theta L(\theta_2) 
&
= 
\bbe_{(s,a)\sim d^{E}}
\bracket{\nabla_{\theta}c(s,a;\theta_1)} 
- 
\bbe_{(s,a)\sim d^{\pi_{\theta_1}}}
\bracket{\nabla_{\theta}c(s,a;\theta_1)} \nn \\
&\qquad 
+ 
\bbe_{(s,a)\sim d^{E}}
\bracket{\nabla_{\theta}c(s,a;\theta_2)} 
-
\bbe_{(s,a)\sim d^{\pi_{\theta_2}}}
\bracket{\nabla_{\theta}c(s,a;\theta_2)}
\end{align}
Using triangle inequality, we have 
\begin{align}
\norm{\bbe_{(s,a)\sim d^{E}}
\bracket{\nabla_{\theta}c(s,a;\theta_1) 
-
\nabla_{\theta}c(s,a;\theta_2) } }_{2} 
&\leq
\bbe_{(s,a)\sim d^{E}} \norm{\nabla_{\theta}c(s,a;\theta_1) 
-
\nabla_{\theta}c(s,a;\theta_2)} \nn \\
&= L_g \norm{\theta_1-\theta_2}_{2} 
\end{align}
Additionally,
\begin{align}
&\norm{\bbe_{d_1}
\bracket{\nabla_{\theta}c(s,a;\theta_1)} 
-
\bbe_{d_2}
\bracket{\nabla_{\theta}c(s,a;\theta_1)}}_{2} 
\nn \\& \leq \norm{
\bbe_{d_1}
\bracket{\nabla_{\theta}c(s,a;\theta_1)
-
\nabla_{\theta}c(s,a;\theta_2)}
} _{2} 
+
\norm{
\bbe_{d_1}
\bracket{\nabla_{\theta}c(s,a;\theta_2)}
-
\bbe_{d_2}
\bracket{\nabla_{\theta}c(s,a;\theta_2)}
}_{2} \\
&\stackrel{(i)}{\leq}
\bbe_{d_1} 
\norm{\nabla_{\theta}c(s,a;\theta_1)
-
\nabla_{\theta}c(s,a;\theta_2)}_{2} 
+
2\max(\| \nabla_\theta c\|_{2})  \cdot \|d^{\pi_{\theta_1}}(\cdot, \cdot)-d^{\pi_{\theta_2}}(\cdot, \cdot)\|_{TV}\nn \\
&\stackrel{(ii)}{\leq} 
\bbe_{d_1} 
L_g\norm{\theta_1-\theta_2}_{2} 
+
2L_r C_d \left\| \Q^{\pi_{\theta_1}}_{\theta_{1}}-\Q^{\pi_{\theta_2}}_{\theta_{2}} \right\|_{2} \nn \\
&\stackrel{(iii)}{\leq}
\bbe_{d_1} 
L_g\norm{\theta_1-\theta_2}_{2} 
+
2L_r C_d \sqrt{|\calS|\cdot|\calA|} \left\| \Q^{\pi_{\theta_1}}_{\theta_{1}}-\Q^{\pi_{\theta_2}}_{\theta_{2}} \right\|_{\infty}\nn \\
&\leq
(L_g+2L_q L_r C_d \sqrt{|\calS|\cdot|\calA|})\norm{\theta_1-\theta_2}_{2} 
\end{align}
where $(i)$ follows from the H\"{o}lder's inequality; $(ii)$ follows from Lemma \ref{lemma:pi-from-q}; $(iii)$ follows from the equivalence of $L^p$ norms, which is also from the H\"{o}lder's inequality.
Combining the two inequalities, 
\begin{align}
    \|\nabla_\theta\llh(\theta_1) - \nabla_\theta \llh(\theta_2)\|_2 & \leq (2L_g+2L_q L_r C_d \sqrt{|\calS|\cdot|\calA|})\norm{\theta_1-\theta_2}_2
\end{align}
Denoting $L_c \coloneqq 2L_g+2L_q L_r C_d \sqrt{|\calS|\cdot|\calA|}$ we have the claimed result.
\end{proof}

We also need Lipschitz continuity of the action-value function w.r.t $\theta$. We can easily get the following lemma in view of Lemma \ref{lemma:irl-graidnet-smooth}, combined with the mean value theorem.
\begin{lemma} \label{lemma:irl-q-smooth}
    Under any feasible policy $\pi$, for any $s\in\calS$, $a\in\calA$ and reward function parameters $\theta_1, \theta_2$, the following holds
    \begin{align}
        \left| \Q^{\pi}(s,a;\theta_1)-\Q^{\pi}(s,a;\theta_2)\right| \leq L_q \norm{\theta_1-\theta_2}_2
    \end{align}
    where $L_q$ is defined in \ref{lemma:irl-graidnet-smooth}.
\end{lemma}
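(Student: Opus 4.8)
The plan is to reduce the claim to the uniform gradient bound on $\bar{Q}^\pi(s,a;\cdot)$ that was already established and then invoke the mean value theorem along the segment joining $\theta_1$ and $\theta_2$. The crucial observation is that here the policy $\pi$ is held \emph{fixed} while only the reward parameter $\theta$ varies, so none of the policy-dependence complications that arise in Lemma~\ref{lemma:irl-graidnet-smooth} (where $\pi_\theta$ itself moves with $\theta$ and forces the total-variation/visitation-distribution estimates) appear. I only need differentiability of $\bar{Q}^\pi(s,a;\theta)$ in $\theta$ together with a uniform bound on its gradient.

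First I would recall that in the proof of Lemma~\ref{lemma:irl-graidnet-smooth} we derived, for a fixed policy $\pi$, the explicit formula $\nabla_\theta \bar{Q} = \nabla_\theta c - \nabla_\theta \rho + P^\pi (I-P^\pi+P^*)^{-1}(I-P^*)\nabla_\theta c$ and bounded it by $L_q$ (explicitly $(2+\beta)L_r$ in the discrete case), using Assumption~\ref{assump:smooth-reward} together with the spectral estimates $\|P^\pi\|_2 = \|I-P^*\|_2 = 1$ and $\|(I-P^\pi+P^*)^{-1}\|_2 \le \beta$. In general state and action spaces, Assumption~\ref{assump:q-gradient-bounded} supplies the very same bound $\max_\theta\|\nabla_\theta \bar{Q}^\pi(s,a;\theta)\|_2 \le L_q$ directly. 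Since this bound is uniform in $\theta$, $s$, and $a$, it is exactly the ingredient the argument requires.

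Next I would apply the fundamental theorem of calculus along the line segment $\theta(t) = \theta_2 + t(\theta_1-\theta_2)$, $t\in[0,1]$, writing
\begin{align}
\bar{Q}^\pi(s,a;\theta_1) - \bar{Q}^\pi(s,a;\theta_2) = \int_0^1 \langle \nabla_\theta \bar{Q}^\pi(s,a;\theta(t)),\, \theta_1-\theta_2\rangle \, dt .
\end{align}
Taking absolute values, applying the Cauchy--Schwarz inequality inside the integral, and using the uniform gradient bound $L_q$ then yields $|\bar{Q}^\pi(s,a;\theta_1) - \bar{Q}^\pi(s,a;\theta_2)| \le L_q\|\theta_1-\theta_2\|_2$, which is the asserted inequality.

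I do not expect a genuine obstacle: the entire content sits in the gradient bound inherited from Lemma~\ref{lemma:irl-graidnet-smooth} and Assumption~\ref{assump:q-gradient-bounded}. The only points requiring minor care are ensuring the segment $\{\theta(t)\}$ remains in the (assumed convex) parameter domain so that the mean value argument is valid, and confirming that $\bar{Q}^\pi(s,a;\cdot)$ is differentiable along this segment — both of which are guaranteed by the standing smoothness assumptions on the reward parameterization, so the result follows essentially immediately.
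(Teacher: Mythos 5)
Your proposal is correct and matches the paper's argument exactly: the paper proves this lemma in one line, invoking the gradient bound $\max_\theta\|\nabla_\theta \bar{Q}^{\pi}(s,a;\theta)\|_2 \le L_q$ established in Lemma~\ref{lemma:irl-graidnet-smooth} (via the explicit formula and spectral estimates in the discrete case, or Assumption~\ref{assump:q-gradient-bounded} in general spaces) together with the mean value theorem, which is precisely your fundamental-theorem-of-calculus-plus-Cauchy--Schwarz argument along the segment joining $\theta_1$ and $\theta_2$. Your observation that the fixed policy $\pi$ removes the visitation-distribution complications is also the right reading of why this lemma is immediate while Lemma~\ref{lemma:irl-graidnet-smooth} requires more work.
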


\newpage
\subsection{Proof of Theorem \ref{thm-irl-q-converge}}
\label{proof:thm-irl-q-converge}
To show the convergence of the action-value function, we first need to show that the policy evaluation step is span contractive, which has a similar proof of Theorem 6.6.6 in \cite{puterman2014markov}. For convenience, we define a $1$-step Bellman operator for a given policy $\pi$ as 
\begin{align}
    &\mathcal{T} \bar{Q} (s,a) \coloneqq c(s,a) - \rho(\pi)
    + P_{\pi}(s',a'|s,a) \bar{Q}(s',a').
\end{align}

\begin{lemma} \label{lemma:contraction}
    Define $\gamma$ by 
    \begin{align} \label{contraction_def}
        &\gamma \coloneqq 1 - \min_{(s_1, a_1), (s_2,a_2)} \tsum_{s \in \calS,a\in \calA} \min\{P_{\pi}(s,a | s_1,a_1), P_{\pi}(s,a | s_2, a_2)\}
    \end{align}
    Then for any two differential Q-functions $p, q$, 
    \begin{align}
        \spnorm{\calT p - \calT q} \leq \gamma \spnorm{p-q}
    \end{align}
    Furthermore, if for any state-action pairs $(s_1, a_1)$ and $(s_2, a_2)$ there exists a third state-action pair $(s',a')$ such that both $P_{\pi}(s',a'|s_1, a_1) >0$ and $P_{\pi}(s',a'|s_2, a_2) >0$, then $\gamma <1$.
\end{lemma}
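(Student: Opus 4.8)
The plan is to recognize this as the classical ergodicity (Dobrushin) coefficient bound for the span seminorm, specialized to the state-action transition kernel $P_\pi$. The first observation is that the cost term $c(s,a)$ and the scalar $\rho(\pi)$ in the definition of $\calT$ do not depend on the input $Q$-function, so they cancel in the difference. Writing $w := p - q$, we obtain for every $(s,a)$
\begin{align}
(\calT p - \calT q)(s,a) = \tsum_{s'\in\calS, a'\in\calA} P_\pi(s',a'\mid s,a)\, w(s',a') =: (P_\pi w)(s,a). \nonumber
\end{align}
Thus it suffices to show that the averaging operator $P_\pi$ contracts the span seminorm, i.e. $\spnorm{P_\pi w} \le \gamma \spnorm{w}$.

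Next I would bound $\spnorm{P_\pi w}$ by comparing two arbitrary state-action pairs. Since $\spnorm{P_\pi w} = \max_{(s_1,a_1)} (P_\pi w)(s_1,a_1) - \min_{(s_2,a_2)}(P_\pi w)(s_2,a_2)$, it is enough to bound $(P_\pi w)(s_1,a_1) - (P_\pi w)(s_2,a_2)$ uniformly. Set $\delta(s',a') := P_\pi(s',a'\mid s_1,a_1) - P_\pi(s',a'\mid s_2,a_2)$, the difference of two probability distributions, so that $\tsum \delta(s',a') = 0$. The central manipulation is to split the sum $\tsum \delta\, w$ over the sets where $\delta > 0$ and $\delta < 0$: on the positive part bound $w$ above by $\max w$, on the negative part bound it below by $\min w$, which yields
\begin{align}
(P_\pi w)(s_1,a_1) - (P_\pi w)(s_2,a_2) = \tsum \delta\, w \le \Delta\,\spnorm{w}, \nonumber
\end{align}
where $\Delta := \tsum_{\delta>0}\delta = -\tsum_{\delta<0}\delta$ is the common mass of the positive and negative parts.

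The remaining algebra identifies $\Delta$ with the quantity appearing in $\gamma$: applying $\min\{a,b\} = a - \max\{0, a-b\}$ termwise and summing gives $\tsum \min\{P_\pi(\cdot\mid s_1,a_1), P_\pi(\cdot\mid s_2,a_2)\} = 1 - \Delta$, hence $\Delta = 1 - \tsum\min\{\cdot,\cdot\}$. Maximizing over the pair $(s_1,a_1),(s_2,a_2)$ — equivalently minimizing the overlap term — converts the uniform bound into $\spnorm{\calT p - \calT q} \le \gamma\,\spnorm{p-q}$ with $\gamma$ exactly as defined. For the strict inequality, the overlap hypothesis guarantees that for every pair there is some $(s',a')$ with both kernels strictly positive, so each overlap sum is at least $\min\{P_\pi(s',a'\mid s_1,a_1), P_\pi(s',a'\mid s_2,a_2)\} > 0$; taking the minimum (attained in the tabular setting, as in Theorem 6.6.6 of Puterman) keeps it strictly positive, forcing $\gamma < 1$.

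The computations are all elementary, so the main obstacle is conceptual rather than technical: getting the signed-part splitting exactly right and cleanly matching the resulting mass $\Delta$ to the coupling quantity $1 - \tsum\min\{\cdot,\cdot\}$, which is what pins down the precise constant $\gamma$. A secondary point to handle carefully is that the argument implicitly relies on the $\min$ over pairs being attained (finite $\calS\times\calA$); in genuinely continuous spaces one would instead invoke Assumption \ref{assump:ergodicity} to secure a uniform positive overlap.
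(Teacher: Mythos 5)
Your proof is correct and follows essentially the same route as the paper's: both cancel the cost and average-reward terms to reduce the claim to span-contraction of $P_\pi$, and both bound the difference across two state-action pairs by the complement of the overlap mass — your positive/negative-part decomposition of $\delta$ is exactly the paper's subtraction of $b(\tau_1,\tau_2|\tau)=\min\{P_{\pi}(\tau|\tau_1),P_{\pi}(\tau|\tau_2)\}$ from each kernel, since $\delta^{+}=P_{\pi}(\cdot|\tau_1)-b$ and $\delta^{-}=P_{\pi}(\cdot|\tau_2)-b$. Your closing remarks on the strict inequality $\gamma<1$ and on attainment of the minimum in the tabular case fill in a part the paper leaves implicit, but do not constitute a different argument.
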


\begin{proof}
    Let $(s^*,a^*)\coloneqq \arg\max_{(s,a)\in \calS\times\calA} \calT p - \calT q $ and $(s_*,a_*)\coloneqq \arg\min_{(s,a)\in \calS\times\calA} \calT p - \calT q $ so that 
    \begin{align*}
        (\calT p) (s^*,a^*) - (\calT q )(s^*,a^*) &= P_{\pi} (p-q) (s^*,a^*)\\
        (\calT p) (s_*,a_*) - (\calT q )(s_*,a_*) &= P_{\pi} (p-q) (s_*,a_*)
    \end{align*}
    from which we get
    \begin{align}
        \spnorm{\calT p - \calT q} &= \max_{(s,a)\in \calS\times\calA} \calT (p - q) - \min_{(s,a)\in \calS\times\calA} \calT (p - q)\nn \\
        &= P_{\pi} (p-q)(s^*,a^*) - P_{\pi} (p-q)(s_*,a_*) \nn \\
        &\leq \max_{(s,a)\in \calS\times\calA} P_{\pi} (p-q)(s,a) - \min_{(s,a)\in \calS\times\calA} P_{\pi} (p-q)(s,a)\nn \\
        &= \spnorm{ P_{\pi} (p-q) }
    \end{align}
    Denote $\tau \coloneqq (s,a)$. Note that 
    \begin{equation}
    \spnorm{P_\pi (p-q)} = \max \left\{\tsum_\tau P_{\pi}(\tau|\tau_1) (p-q)(\tau_1) - \tsum_\tau P_{\pi}(\tau|\tau_2) (p-q)(\tau_2) \right\}
    \end{equation}
    So it is sufficient to prove that 
    \begin{equation}
        \spnorm{P_\pi (p-q)} \leq \gamma \spnorm{p-q}.
    \end{equation}
    For notational convenience, let $b(\tau_1, \tau_2|\tau) = \min\{P_{\pi}(\tau|\tau_1), P_{\pi}(\tau|\tau_2)\}$. We prove the above inequality by first showing that the inequality holds for any $\tau\coloneqq (s_1,a_1)$, and then taking a maximum over all $(s,a)$ pairs.
    \begin{align}
        &\tsum_\tau P_{\pi}(\tau|\tau_1) (p-q)(\tau_1) - \tsum_\tau P_{\pi}(\tau|\tau_2) (p-q)(\tau_2) \nn \\
        &= \tsum_\tau \left[ P_{\pi}(\tau|\tau_1) - b(\tau_1,\tau_2|\tau)\right] (p-q)(\tau_1)
        - \tsum_\tau \left[ P_{\pi}(\tau|\tau_2) - b(\tau_1,\tau_2|\tau)\right] (p-q)(\tau_2) \nn \\
        &\leq \tsum_\tau \left[ P_{\pi}(\tau|\tau_1) - b(\tau_1,\tau_2|\tau)\right] \max(p-q)
        - \tsum_\tau \left[ P_{\pi}(\tau|\tau_2) - b(\tau_1,\tau_2|\tau)\right] \min(p-q)\nn \\
        &= \tsum_\tau \left[ P_{\pi}(\tau|\tau_1) - b(\tau_1,\tau_2|\tau)\right] (p-q)(s^*,a^*)
        - \tsum_\tau \left[ P_{\pi}(\tau|\tau_2) - b(\tau_1,\tau_2|\tau)\right] (p-q)(s_*,a_*)\nn \\
        &= [1-\tsum_\tau b(\tau_1, \tau_2;\tau)] \spnorm{p-q}\\
        &\leq \gamma \spnorm{p-q}
    \end{align}
    Simply taking the maximum over all state-action pairs we can get the desired result.
\end{proof}

In view of \ref{lemma:contraction}, when the MDP does not have $1$ step contraction, we can construct a $J$-step Bellman operator for some integer $J$, which would be sufficient due to Assumption \ref{assump:ergodicity} (see Theorem 8.5.3 \citep{puterman2014markov}). 

Now we are ready to prove Theorem \ref{thm-irl-q-converge}.
\begin{proof}
Taking expectation w.r.t $\zeta_k$ and using triangle inequality
\begin{align}
    \spnorm{\mathbb{E}_{\zeta_{k}} \stochQ{k}_{\theta_k}-\Bar{Q}_{\theta_k}^{\pi_{\theta_k}}} &\leq \spnorm{\Bar{Q}_{\theta_k}^{\pi_k}-\Bar{Q}_{\theta_k}^{\pi_{\theta_k}}} + \spnorm{\mathbb{E}_{\zeta_{k}} \stochQ{k}_{\theta_k}-\Bar{Q}_{\theta_k}^{\pi_k}}, \nn\\
    &\leq \spnorm{\Bar{Q}_{\theta_k}^{\pi_k}-\Bar{Q}_{\theta_k}^{\pi_{\theta_k}}} + \varsigma.
\end{align}
Here $\Bar{Q}_{\theta_k}^{\pi_k}$ is the ideal differential $Q$-function in the $k$th iteration, and $\Bar{Q}_{\theta_k}^{\pi_{\theta_k}}$ is the optimal differential $Q$-function w.r.t to the current reward function $c(s,a;{\theta_k})$. We cannot expect them to be close from the beginning, as we only partially solve the entropy regularized RL problem. We show in this theorem that the gap between them is shrinking. 

We denote $\bar{Q}^{\pi}_{\theta}$ as $\bar{Q}^{\pi}(s,a;\theta)$ for clarity, as it represents the differential $Q$-function with the reward function parameterized by $\theta$ and follow sample distribution induced by policy $\pi$. 

Note that
    \begin{align}
        &\spnorm{\Qiter{k}-\Qbiter{k}} \nn \\
        &= \left\|\Qiter{k}-\Qbiter{k} + \Qbiter{k-1} - \Qbiter{k-1} \right. \nn\\
        &\qquad + \left. \Q^{\pi_{k}}(s,a;\theta_{k-1}) - \Q^{\pi_{k}}(s,a;\theta_{k-1}) \right\|_{sp, \infty} \nn\\
        &\leq \spnorm{\Qiter{k-1}-\Qbiter{k}} + \spnorm{\Qiter{k}- \Q^{\pi_{k}}(s,a;\theta_{k-1}) }\nn\\
        &\qquad + \spnorm{\Qbiter{k-1} - \Q^{\pi_{k}}(s,a;\theta_{k-1})} \nn\\
        &\stackrel{(i)}{\leq} L_q \norm{\theta_k - \theta_{k-1}}_2 + L_q \norm{\theta_k - \theta_{k-1}}_2 + \spnorm{\Qbiter{k-1} - \Q^{\pi_{k-1}}(s,a;\theta_{k-1})}.
    \end{align}
    where $(i)$ follows Lemma \ref{lemma:irl-graidnet-smooth}-\ref{lemma:irl-q-smooth}. 
    Note that from the update rule $\theta_{k} = \theta_{k-1}-\alpha g_k$, 
    \begin{equation} \label{eq:irl_theta_iterate_bound}
        \norm{\theta_{k} - \theta_{k-1}}_2 \leq \alpha \norm{g_k}_2 = \alpha \norm{\nabla [\rho_{E} - \rho_{\pi_k}]}_2  \leq \alpha L_c
    \end{equation}
    For the second term, note that 
    \begin{align}
        \Qbiter{k-1} &= \calT \Qbiter{k-1},\\
        \Q^{\pi_{k}}(s,a;\theta_{k-1}) &= \calT \Q^{\pi_{k-1}}(s,a;\theta_{k-1}).
    \end{align}
    From lemma \ref{lemma:contraction} we have
    \begin{equation}
        \spnorm{\Qbiter{k-1} - \Q^{\pi_{k-1}}(s,a;\theta_{k-1})} 
    \leq \gamma \spnorm{\Qirl{k-1} - \Qbellman{k-1}}.
    \end{equation}
    Combine the above inequalities
    \begin{align}
        &\spnorm{\mathbb{E}_{\zeta_{k}}  \stochQ{k}_{\theta_k}-\Qbellman{k}} \leq  \gamma \spnorm{\Qirl{k-1} - \Qbellman{k-1}} + 2\alpha L_qL_c + \varsigma.
    \end{align}
    Sum from $k=1$ to $K$, rearrange the equations, we have
    \begin{align}
        \tfrac{1}{K} \tsum_{k=0}^{K-1} \spnorm{\stochQ{k}_{\theta_k}-\Qbellman{k}} \leq \tfrac{ \spnorm{\Qirl{0}-\Qbellman{0}}}{(1-\gamma)K} + \tfrac{2\alpha L_qL_c+\varsigma}{1-\gamma}.
    \end{align}
    Pick step size $\alpha = \tfrac{\alpha_0}{\sqrt{K}}$ with $\alpha_0 > 0$, we have
    \begin{align}
        \tfrac{1}{K} \tsum_{k=0}^{K-1} \spnorm{\stochQ{k} -\Bar{Q}^{\pi_{\theta_k}}} = \mathcal{O}(K^{-1})+\mathcal{O}(K^{-1/2}) + \tfrac{\varsigma}{1-\gamma}.
    \end{align}
\end{proof}
\newpage

\subsection{Proof of Theorem \ref{thm:irl_policy_reward_converge}}
\label{proof:thm:irl_policy_reward_converge}
\begin{proof}
    Note that a constant shift over the action-value function induces the same policy. It is easy to show the following relation as presented in \cite{zeng2022maximum}
    \begin{align}
    \infnorm{\bbe_{\zeta_k} \log \pi_{k+1} - \log \pi_{\theta_k}} 
    &\leq 
    \infnorm{ 
    \stochQ{k} - \bar{Q}^{\pi_k}_{\theta_{k}}
    } \nn \\ 
    &= 
    \infnorm{ 
    \stochQ{k} - \Q^{\pi_k}_{\theta_{k}}
    + c_{\theta_k}e - c_{k}e } .
    \end{align}
    The relation holds for any $c_{\theta_k}, c_{k}$. Denoting $c=c_{\theta_k} - c_{k}$, we conclude that 
    \begin{align}
    \infnorm{\bbe_{\zeta_k} \log \pi_{k+1} - \log \pi_{\theta_k}} 
    & \leq
    \min_{c\in\mathbb{R}}\infnorm{ 
    \bbe_{\zeta_k} \stochQ{k} - \Q^{\pi_k}_{\theta_{k}}
    + ce }
    =\tfrac{1}{2} \spnorm{\bbe_{\zeta_k}\stochQ{k} - \Q^{\pi_k}_{\theta_{k}}}.
    \end{align}
    In view of Theorem \ref{thm:irl_policy_reward_converge}, we have
    \begin{equation}
        \tfrac{1}{K}\tsum_{k=0}^{K-1} \infnorm{\bbe_{\zeta_k} [\log \pi_{k+1}] - \log \pi_{\theta_k}} \leq \tfrac{ \spnorm{\Qirl{0}-\Qbellman{0}}}{2(1-\gamma)K} + \tfrac{\alpha L_qL_c+\varsigma}{2(1-\gamma)}.
    \end{equation}
    The convergence result for reward function approximation is similar to smooth-nonconvex problems in optimization. We begin by using the smoothness property of the objective function $L$.
    \begin{align}
        L(\theta_{k+1}) & \leq L(\theta_k) + \chevron{\nabla_{\theta} L(\theta_k)}{\theta_{k+1}-\theta_k} 
        + \tfrac{L_c}{2} \|\theta_{k+1} - \theta_k\|_2^2\nn \\
        &\stackrel{(i)}{=} L(\theta_k) -
        \alpha \chevron{\nabla_{\theta} L(\theta_k)}{g_k} - \tfrac{1}{2} \alpha^2 L_c \|g_k\|_2^2\nn \\
        &\stackrel{}{\leq } L(\theta_k) - \alpha \chevron{\nabla_{\theta} L(\theta_k)}{g_k-\nabla_{\theta} L(\theta_k)} - \alpha \|\nabla_{\theta} L(\theta_k)\|_2^2
        + \tfrac{1}{2} \alpha^2 L_c \|g_k\|_2^2\nn \\
        &\stackrel{(ii)}{=} L(\theta_k) - \alpha \chevron{\nabla_{\theta} L(\theta_k)}{g_k-\nabla_{\theta} L(\theta_k)} - \alpha \|\nabla_{\theta} L(\theta_k)\|_2^2
        + 2 \alpha^2 L_c^3.
    \end{align}
    where $(i)$ follows the update rule; $(ii)$ follows the bound on the gradient $g_k$, i.e.
    \begin{align}
        \|g_k\|_2 &\leq \| \bbe_{(s,a)\sim d^{E}}\bracket{\nabla_\theta c(s,a;\theta_k)} - \bbe_{(s,a)\sim d^{\pi}}\bracket{\nabla_\theta c(s,a;\theta_k)}\|_2 \nn \\
        &\leq \| \bbe_{(s,a)\sim d^{E}}\bracket{\nabla_\theta c(s,a;\theta_k)} \|_2 + \| \bbe_{(s,a)\sim d^{\pi}}\bracket{\nabla_\theta c(s,a;\theta_k)}\|_2\nn \\
        &\leq \max \| \nabla_\theta c\|_2 +\max \| \nabla_\theta c\|_2\nn \\
        &=2L_c.
    \end{align}
    Taking expectation w.r.t $\zeta_k$ and state-action sample distribution $d^{\theta_k}$, on the above equation, 
    \begin{align}
        \bbe\bracket{L(\theta_{k+1})} &\leq \bbe\bracket{L(\theta_k)} - \alpha \bbe\bracket{\chevron{\nabla_{\theta} L(\theta_k)}{g_k-\nabla_{\theta} L(\theta_k)} } - \alpha \bbe\bracket{\|\nabla_{\theta} L(\theta_k)\|_2^2} + 2 \alpha^2 L_c^3\nn \\
        &= \bbe\bracket{L(\theta_k)} - \alpha \bbe\bracket{\chevron{\nabla_{\theta} L(\theta_k)}{ \bbe\bracket{g_k-\nabla_{\theta} L(\theta_k)|\theta_k}} } - \alpha \bbe\bracket{\|\nabla_{\theta} L(\theta_k)\|_2^2} + 2 \alpha^2 L_c^3 \nn
        \\&\stackrel{(i)}{\leq} \bbe\bracket{L(\theta_k)} + 2\alpha L_c \bbe\bracket{\norm{\bbe_{(s,a)\sim d^{\pi_{k+1}}}\bracket{\nabla_{\theta}c(s,a;\theta_k)} - \bbe_{(s,a)\sim d^{\pi_{\theta_k}}}\bracket{\nabla_{\theta}c(s,a;\theta_k)}}} \nn \\
        &\qquad - \alpha \bbe\bracket{\|\nabla_{\theta} L(\theta_k)\|_2^2} + 2 \alpha^2 L_c^3 \nn \\
        &= \bbe\bracket{L(\theta_k)} 
        + 4\alpha L_c \| \nabla_\theta c\|_2 \bbe \bracket{\|d(s,a;\pi_{\theta_k}) - d(s,a;\pi_{k+1})\|_{TV}} \nn \\ 
        &\qquad - \alpha \bbe\bracket{\|\nabla_{\theta} L(\theta_k)\|^2} + 2 \alpha^2 L_c^3 \nn \\
        &\stackrel{(ii)}{\leq} \bbe\bracket{L(\theta_k)} 
        + 4\alpha L_c \max \| \nabla_\theta r\|_2 \bbe \bracket{\|d(s,a;\pi_{\theta_k}) - d(s,a;\pi_{k+1})\|_{TV}}\nn \\ 
        &\qquad - \alpha \bbe\bracket{\|\nabla_{\theta} L(\theta_k)\|^2} + 2 \alpha^2 L_c^3 \nn \\
        &\stackrel{(iii)}{\leq} \bbe\bracket{L(\theta_k)} 
        + 4\alpha L_c L_r \bbe \bracket{\|d(s,a;\pi_{\theta_k}) - d(s,a;\pi_{k+1})\|_{TV}} 
        \nn \\ &\qquad 
        - \alpha \bbe\bracket{\|\nabla_{\theta} L(\theta_k)\|^2} + 2 \alpha^2 L_c^3\nn \\
        &\stackrel{(iv)}{\leq} \bbe\bracket{L(\theta_k)} 
        + 4\alpha L_c L_r C_d \sqrt{|\calS|\cdot |\calA|} \bbe\bracket{\left\| \bar{\mathcal{Q}}^{\pi_{\theta_k}}(s,a;\theta_k)-\Q^{\pi_{k+1}}(s,a;\theta_{k}) +ce\right\|_{\infty}}, 
        \nn \\ 
        &\qquad 
        - \alpha \bbe\bracket{\|\nabla_{\theta} L(\theta_k)\|^2} + 2 \alpha^2 L_c^3\nn \\
        &\leq \bbe\bracket{L(\theta_k)} + 2\alpha L_c L_r C_d \sqrt{|\calS|\cdot |\calA|}\bbe\bracket{\spnorm{ \bar{\mathcal {Q}}^{\pi_{\theta_{k}}, \zeta_k}_{\theta_{k}} - \Q^{\pi_k}_{\theta_{k}}}}
        \nn \\ &\qquad 
        - \alpha \bbe\bracket{\|\nabla_{\theta} L(\theta_k)\|^2} + 2 \alpha^2 L_c^3.
    \end{align}
    where $(i)$ follows Cauchy-Schwartz inequality, notice that 
    \begin{align}
        &- \alpha \bbe\bracket{\chevron{\nabla_{\theta} L(\theta_k)}{ \bbe\bracket{g_k-\nabla_{\theta} L(\theta_k)|\theta_k}} }\nn\\
        &= \alpha \bbe\bracket{\chevron{\nabla_{\theta} L(\theta_k)}{ \bbe\bracket{-g_k+\nabla_{\theta} L(\theta_k)|\theta_k}} }\nn\\
        &\leq \alpha \norm{\nabla_\theta L(\theta_k)} \bbe\bracket{\norm{\bbe_{(s,a)\sim d^{\pi_{k+1}}}\bracket{\nabla_{\theta}c(s,a;\theta_k)} - \bbe_{(s,a)\sim d^{\pi_{\theta_k}}}\bracket{\nabla_{\theta}c(s,a;\theta_k)}}}. 
    \end{align}
    and that
    \begin{align}
        &\bbe\bracket{\norm{\bbe_{(s,a)\sim d^{\pi_{k+1}}}\bracket{\nabla_{\theta}c(s,a;\theta_k)} - \bbe_{(s,a)\sim d^{\pi_{\theta_k}}}\bracket{\nabla_{\theta}c(s,a;\theta_k)}}}\nn\\
        &\leq
        2 \| \nabla_\theta c\|_2 \bbe \bracket{\|d(s,a;\pi_{\theta_k}) - d(s,a;\pi_{k+1})\|_{TV}}.
    \end{align}
    follows the H\"{o}lder's inequality;
    $(ii)$ follows the definition of the total-variation norm;
    $(iii)$ follows the bound on the gradient $\nabla_\theta c$;
    $(iv)$ follows from the fact that the policy is the Boltzmann distribution of the exponential of the action-value function, Lemma \ref{lemma:pi-from-q}, and equivalence of norms.
    Rearrange terms we have
    \begin{equation}
    \begin{aligned}
        \alpha \bbe\bracket{\|\nabla_{\theta} L(\theta_k)\|_2^2} &\leq \bbe\bracket{L(\theta_{k})} - \bbe\bracket{L(\theta_{k+1})} + 2L_c^3\alpha^2\\
        &\qquad + 2\alpha L_c L_r C_d \sqrt{|\calS|\cdot |\calA|}\bbe\bracket{\spnorm{\bar{\mathcal {Q}}^{\pi_{\theta_{k}}, \zeta_k}_{\theta_{k}} - \Q^{\pi_k}_{\theta_{k}}}}.
    \end{aligned}
    \end{equation}
    Sum from $k=0$ to $K-1$, and divide both sides by $\alpha K$,
    \begin{align}
        \tfrac{1}{K}\tsum_{k=0}^{K-1} \bbe\bracket{\|\nabla_{\theta} L(\theta_k)\|_2^2} &\leq 2L_c^3\alpha  
        + \tfrac{\bbe\bracket{L(\theta_{0})} - \bbe\bracket{L(\theta_{K})}}{\alpha K}\nn \\
        &\qquad + \tfrac{2L_c L_r C_d \sqrt{|\calS|\cdot |\calA|}}{K} \tsum_{k=0}^{K-1}\bbe\bracket{\spnorm{\bar{\mathcal {Q}}^{\pi_{\theta_{k}}, \zeta_k}_{\theta_{k}} - \Q^{\pi_k}_{\theta_{k}}}}.
    \end{align}
    In view of theorem \ref{thm-irl-q-converge}, we have
    \begin{align}
        &\tfrac{1}{K}\tsum_{k=0}^{K-1} \bbe\bracket{\|\nabla_{\theta} L(\theta_k)\|_2^2} \nn \\
        &\leq 2L_c^3\alpha +  \tfrac{ \bbe\bracket{L(\theta_{0})} - \bbe\bracket{L(\theta_{K})}}{\alpha K}
        + \tfrac{2L_c L_r C_d \sqrt{|\calS|\cdot |\calA|}}{K} \tsum_{k=0}^{K-1}\bbe\bracket{\spnorm{ \bar{\mathcal {Q}}^{\pi_{\theta_{k}}, \zeta_k}_{\theta_{k}} - \Q^{\pi_k}_{\theta_{k}}}}\nn \\
        &\leq 2L_c^3\alpha_0/\sqrt{K} +  \alpha_0\tfrac{ \bbe\bracket{L(\theta_{0})} - \bbe\bracket{L(\theta_{K})}}{\sqrt{K}}
        + \tfrac{ 2L_c L_r C_d \sqrt{|\calS |\cdot |\calA |}  \spnorm{\Qirl{0}-\Qbellman{0}}}{(1-\gamma)K} \nn \\
        &\qquad + \tfrac{4\alpha_0 L_qL_c^2 L_r C_d \sqrt{|\calS |\cdot |\calA |} }{\sqrt{K}(1-\gamma)} + \tfrac{2\nu L_c L_r C_d \sqrt{ |\calS | \cdot |\calA |} }{1-\gamma} \nn \\
        &= \mathcal{O}(K^{-1}) + \mathcal{O}(K^{-1/2})  +\tfrac{2\nu L_c L_r C_d \sqrt{|\calS | \cdot | \calA |} }{1-\gamma}. 
    \end{align}
Note that 
\begin{align}
\bbe\bracket{\spnorm{ 
\bar{\mathcal {Q}}^{\pi_{\theta_{k}}, \zeta_k}_{\theta_{k}} - 
\Q^{\pi_k}_{\theta_{k}}}
} 
&=
\bbe\bracket{\spnorm{\bar{\mathcal{Q}}^{\pi_k, \zeta_k}_{\theta_k} - \Q^{\pi_{\theta_k}}_{\theta_{k}} + \Q^{\pi_{\theta_{k}}}_{\theta_k} - \Q^{\pi_k}_{\theta_{k}} }}\nn \\
&\leq 
\bbe\bracket{\spnorm{\Q^{\pi_{\theta_{k}}}_{\theta_k}- \Q^{\pi_k}_{\theta_{k}} }} 
+
\bbe \bracket{ \spnorm{ \bar{\mathcal {Q}}^{\pi_{\theta_{k}}, \zeta_k}_{\theta_{k}} - \Q^{\pi_k}_{\theta_{k} }}}\nn \\
&\leq 
\bbe\bracket{\spnorm{\Q^{\pi_{\theta_{k}}}_{\theta_k}- \Q^{\pi_k}_{\theta_{k}} }}
+ 
\nu.
\end{align}
\end{proof}
\newpage
\subsection{Experiment Details} \label{sec:experiment_detail}
Here we present some details about the experiments. All experiments are performed on an Apple MacBook Pro with an M1 chip.
For RL tasks, We use a discount factor $\gamma=0.99$ for training the SAC. SAC and SPMD agents are trained with $3e6$ samples, with a learning rate of $3e-4$. The average-reward estimate is calculated by taking an average of the sampled batch. The training process of each agent takes around 3 hours for one environment. 

For IRL tasks, the expert demonstrations are collected from training a SAC agent for five million steps. 
We train the IPMD agent $2e6$ samples (number of interactions with the environment) while training IQ-Learn and $f$-IRL using the script provided along with its implementation. To avoid the objective from exploding, we also add a scaled (0.05 the size of) norm of the predicted reward as a regularization term. The inexact evaluation of the gradient can be incorporated into the error term in stochastic gradient descent so that this change has no major impact on the analysis.

For the reward recovery experiment,
we train the expert agent using SAC with a 0.99 discount factor. IQ-Learn is trained with 1e5 iterations for Pendulum and 5e5 iterations for LunarLanderContinuous. IPMD is trained with 1e5 iterations for both environments. Both agents use 11 expert trajectories. 

\subsection{Miscellaneous}
We anticipate no potential negative societal impacts concerning this research.

\end{document}